\documentclass[letterpaper, 10 pt, conference]{ieeeconf}

\IEEEoverridecommandlockouts                              %

\usepackage{ifthen}
\newboolean{showcomments}
\setboolean{showcomments}{false}
\usepackage[usenames,dvipsnames]{color}

\definecolor{bleudefrance}{rgb}{0.19, 0.55, 0.91}
\definecolor{ao(english)}{rgb}{0.0, 0.5, 0.0}

\newcommand{\addcite}[0]{\ifthenelse{\boolean{showcomments}}
{\textcolor{purple}{(add cite(s)) }}{}}%

\newcommand{\addcites}[1]{\ifthenelse{\boolean{showcomments}}
{\textcolor{blue}{(add cite(s)) #1}}{}}%

\newcommand{\you}[1]{  \ifthenelse{\boolean{showcomments}}
{{\color{blue} PCY: #1}}{}}
\newcommand{\youmargin}[1]{\ifthenelse{\boolean{showcomments}}{\marginpar{\color{bleudefrance}\tiny PCY: #1}}{}}

\newboolean{showedits}
\setboolean{showedits}{true}
\usepackage[markup=underlined]{changes}
\definechangesauthor[color=bleudefrance]{PCY}
\newcommand{\ayou}[1]{
\ifthenelse{\boolean{showedits}}
{\added[id=PCY]{#1}}
{\!#1\hspace{-4.75pt}}
}
\newcommand{\ryou}[2]{
\ifthenelse{\boolean{showedits}}
{\replaced[id=PCY]{#1}{#2}}
{\!#1\hspace{-4.75pt}}
}
\newcommand{\dyou}[1]{
\ifthenelse{\boolean{showedits}}
{\deleted[id=PCY]{#1}}
{}
}
\newcommand{\hl}[1]{\ifthenelse{\boolean{showcomments}}
{\textcolor{purple}{#1}}{}}%
\newcommand{\AC}[1]{\ifthenelse{\boolean{showcomments}}{\marginpar{\color{bleudefrance}\tiny AC: #1}}{}}

\usepackage[labelformat=simple]{subcaption}

\graphicspath{{figures/}}

\usepackage{epsfig, latexsym, times, bbm}
\usepackage{color}
\usepackage{colortbl}
\usepackage{array}
\usepackage{cite}

\usepackage{amsmath}
\usepackage{graphics}
\usepackage{graphicx}
\usepackage{subcaption} %
\usepackage{amssymb}
\usepackage{url}
\usepackage{comment}

\usepackage{amsthm}

\usepackage{multirow}

\usepackage{bbding}
\usepackage{amsfonts}
\usepackage{cases}

\usepackage{algorithmic}
\usepackage{mathrsfs}
\usepackage{bm}
\usepackage{verbatim}
\usepackage{enumerate}

\usepackage{booktabs}
\usepackage[linesnumbered,ruled,longend]{algorithm2e}
\usepackage{textcomp}
\usepackage{subeqnarray}
\usepackage{dsfont,optidef}

\usepackage{makecell}

\newcommand{\PreserveBackslash}[1]{\let\temp=\\#1\let\\=\temp}
\newcolumntype{C}[1]{>{\PreserveBackslash\centering}p{#1}}
\newcolumntype{R}[1]{>{\PreserveBackslash\raggedleft}p{#1}}
\newcolumntype{L}[1]{>{\PreserveBackslash\raggedright}p{#1}}

\renewcommand\thepage{}

\allowdisplaybreaks

\usepackage{cleveref}
\crefname{figure}{Fig.}{Figs.}

\let\labelindent\relax
\usepackage{enumitem}

\usepackage{pifont}
\def\ba{\begin{array}}
	\def\ea{\end{array}}
\newcommand{\beq}{\begin{equation}}
	\newcommand{\eeq}{\end{equation}}
\newcommand{\bq}{\begin{eqnarray}}
	\newcommand{\eq}{\end{eqnarray}}
\newcommand{\bqn}{\begin{eqnarray*}}
	\newcommand{\eqn}{\end{eqnarray*}}
\newcommand{\bee}{\begin{enumerate}}
	\newcommand{\eee}{\end{enumerate}}
\newcommand{\bi}{\begin{itemize}}
	\newcommand{\ei}{\end{itemize}}
\newcommand{\bseq}{\begin{subequations}}
\newcommand{\eseq}{\end{subequations}}

\newcommand{\bbR}{\mathbb{R}}

\newcommand{\bbQ}{\mathbb{Q}}

\newcommand{\bbP}{\mathbb{P}}

\newcommand{\calD}{\mathcal{D}}

\newcommand{\calM}{\mathcal{M}}
\newcommand{\calN}{\mathcal{N}}
\newcommand{\calO}{\mathcal{O}}

\newcommand{\calW}{\mathcal{W}}
\newcommand{\calX}{\mathcal{X}}
\newcommand{\calY}{\mathcal{Y}}
\newcommand{\calZ}{\mathcal{Z}}

\newcommand{\ev}{\mathbb{E}}

\newcommand{\toodo}[1]{  \noindent\ifthenelse{\boolean{showcomments}}
{{\color{red} To-do: #1}}{}}

\newcommand{\oprocendsymbol}{\hbox{$\bullet$}}
\newcommand{\oprocend}{\relax\ifmmode\else\unskip\hfill\fi\oprocendsymbol}

\newtheorem{theorem}{{Theorem}}%
\newtheorem{lemma}[theorem]{{Lemma}}
\newtheorem{definition}{{Definition}}

\newtheorem{proposition}[theorem]{{Proposition}}

\newtheorem{assumption}{{Assumption}}

\newtheorem*{inferring procedure}{{Inferring Procedure}}

\renewcommand{\addcites}[1]{\ifthenelse{\boolean{showcomments}}{(add cite(s)) #1}{}}

\usepackage{enumitem}
\renewcommand{\baselinestretch}{.99}

\begin{document}

\title{Distributionally Robust Federated Learning with Multi-Source Data}

\author{Yingzhu Liu, Zhongkui Li, Pengcheng You\textcolor{black}{$^{\dagger}$}, Ashish Cherukuri %
\thanks{Y. Liu, Z. Li, and P. You are with the Department of Control Science and Systems Engineering, Peking University, Beijing, China. A. Cherukuri is with the Engineering and Technology Institute Groningen and the Jan C. Willems Center for Systems and Control, University of Groningen, The Netherlands. \textcolor{black}{This work was supported in part by the National Natural Science Foundation of China (NSFC) under grants 72671003, 72431001, and 62373008. AC was supported in part by TKI HTSM FD-CODE (24PPS175).}}
\thanks{\textcolor{black}{$^{\dagger}$Corresponding author: Pengcheng You.}}
}

\maketitle

\pagestyle{empty}  %
\thispagestyle{empty} %
\begin{abstract}
Federated learning trains a shared model from private client data. In practice, data-generating distributions may differ, and the true mixture across clients is often unknown, making the underlying group distribution difficult to specify.
Existing approaches address cross-client mixture uncertainty by optimizing against the worst-case mixture, yet assume accurate client-wise distribution estimates.
However, these estimates can be unreliable when based on finite samples.
To handle both cross-client mixture uncertainty and within-client distributional ambiguity, we construct a global ambiguity set as the union of admissible mixtures of local ambiguity sets.
The construction allows client-specific ambiguity radii and admits a client-wise separable reformulation.
Leveraging this structure, we establish a high-probability out-of-sample performance guarantee.
We further develop a federated algorithm for a penalty-based reformulation and prove its convergence under milder regularity conditions.
Simulations validate the algorithm’s effectiveness.
\end{abstract}

\IEEEpeerreviewmaketitle

\section{Introduction}
In a federated learning (FL) system, a central server coordinates multiple clients to train a shared model while keeping data private.
This decentralized and privacy-preserving paradigm has been widely applied in mobile devices and healthcare networks~\cite{antunes2022federated}.
In practice, varying measurement conditions across clients often lead to heterogeneous local data-generating distributions. As a result, two sources of uncertainty arise. First, the true global distribution, under which the learned model is expected to perform well, is unknown. This is assumed to be a mixture of the clients' local distributions, but the mixing ratio is unknown. Second, since local samples are limited, the local datasets might not accurately represent the true local distributions. These two challenges hinder the development of robust models that generalize well.

To address the within-client distributional ambiguity, we adopt the framework of Distributionally Robust Optimization (DRO)~\cite{mohajerinesfahani2018datadriven,kuhn2019wasserstein,cherukuri2020cooperative}. DRO optimizes against the worst-case expected cost over an ambiguity set, that is, a family of candidate distributions consistent with the observed data. 
Among various choices of ambiguity sets, the Wasserstein ball has gained significant attention due to its ability to capture geometric feature shifts while often admitting tractable finite-dimensional reformulations~\cite{mohajerinesfahani2018datadriven}. 

In multi-source settings, a key challenge is how to define a coherent global learning objective. Classical federated algorithms, such as FedAvg~\cite{mcmahan2017communication}, assign weights in proportion to local sample sizes. 
However, this empirical approach often fails to match the true mixture weights in practice, potentially leading to biased models.
To address this issue, several approaches instead focus on the worst-case performance.
For instance, agnostic federated learning (AFL) considers the worst-case mixture of local objectives to improve robustness and fairness~\cite{mohri2019agnostica}. From a distribution-level perspective, Group DRO similarly optimizes over the worst-case mixture of local distributions under group-level shifts~\cite{soma2022near,yu2024efficient}. 
However, both lines of work presume accurate local distribution estimates, which can be fragile in the finite-sample regime.
While recent work incorporates local ambiguity into Group DRO models~\cite{konti2025group}, its solution is limited to centralized settings, and extending it to federated learning remains non-trivial.

Another line of work, not necessarily restricted to federated learning,  models uncertainty in multi-source data through a global ambiguity set centered at a representative distribution, such as a Wasserstein barycenter~\cite{rychener2024wasserstein} or a specific mixture of distributions~\cite{nguyen2022generalization}.
One recent variant uses an unbalanced Wasserstein ambiguity set for outlier robustness, with the center varying over admissible mixtures of local distributions~\cite{wang2025distributionally}. However, both the single-center formulation and this unbalanced Wasserstein variant couple local uncertainties together. As a result, the relation between the ambiguity set and the local distributions is less transparent, which hinders explicit out-of-sample performance analysis.
By comparison, a mixture of Wasserstein balls preserves local structure more directly as in~\cite{ibrahim2025fdrsvm}, but the formulation in that work is restricted to fixed mixture weights and therefore does not account for cross-client mixture uncertainty.

To address these challenges, we develop a distributionally robust FL framework based on admissible mixtures of client-wise Wasserstein ambiguity sets. We establish out-of-sample guarantees for the resulting global ambiguity set and develop BiDRO-FL for a penalized formulation.
Our main contributions are summarized as follows:
\begin{enumerate}
    \item We design a global ambiguity set capturing the two kinds of uncertainty: the cross-client mixture uncertainty and the client-specific ambiguity in estimating the true local distribution using local samples.
    \item We derive a lower bound on the probability that the global ambiguity set contains the true global distribution and an out-of-sample performance bound for the resulting robust solution under the true distribution.
    \item We develop a federated algorithm with provable convergence for a Lagrangian penalty reformulation.
\end{enumerate}
\ifdefined\shortpaper\fi

\section{Problem Statement}
Consider a federated learning system with $n$ clients indexed by $[n]:=\{1,\ldots,n\}$.
Each client  computes model updates locally using private data and communicates only with a central server to train a shared model.
Let $\bbP$ denote the \emph{unknown} true group distribution of clients. 
Let 
$\xi$ 
be a measurable random variable in the space $\Xi\subseteq\mathbb R^m$ that follows the distribution $\bbP$, where   the set $\Xi$ is convex and compact.
Note that the true distribution $\bbP$ is assumed to be a mixture of local distributions:
\begin{align}\label{eq:mixture4ground_truth}
    \bbP=\sum_{i=1}^n \alpha_i^\star \bbP_i\,,
\end{align}
where $\alpha^\star:=(\alpha^\star_1,\ldots,\alpha_n^\star)\in\Delta_{n-1}$ is \emph{unknown}, and $\bbP_i$ is the true local distribution for client $i\in[n]$. Here, $\Delta_{n-1}:=\{\alpha\in\bbR^n:\alpha_i\geq0,\ \sum_{i=1}^n\alpha_i=1\}$ is the probability simplex.
For each client $i$, the samples in the local dataset $\calD_i:=\{\widehat{\xi}_i^1,\ldots,\widehat{\xi}_i^{N_i}\}$ are drawn independently from the \emph{unknown} local distribution $\bbP_i$. The datasets $\calD_1,\ldots,\calD_n$ are mutually independent.
Note that $\alpha^\star_i$ may differ from the empirical proportions
 $N_i/N$, where $N:=\sum_iN_i$ is the total number of samples.
The goal is to learn a shared model $x\in\calX$ from data distributed across clients, where the learning part is characterized by a cost function $\ell:\bbR^d\times\Xi\rightarrow\bbR, (x,\xi)\mapsto\ell(x,\xi)$, which is continuous on $\calX\times\Xi$.
The performance of the learned model is evaluated under the true group distribution $\bbP$, i.e., $\ev_{\bbP}[\ell(x,\xi)]$.
Here, the set $\calX$ is convex and compact with $D_x:=\max_{x,x^\prime\in\calX}\|x-x^\prime\|$, where $\|\cdot\|$ denotes the Euclidean norm in the appropriate dimension throughout the paper.

\subsection{Within-client distributional ambiguity}
Given the observed samples, each client can approximate $\bbP_i$ by the empirical distribution $\widehat{\bbP}_i$ from local data, given by
\begin{align}\label{eq:empirical_distri}
    \widehat{\bbP}_i:=\frac{1}{N_i}\sum_{k=1}^{N_i}\delta_{\widehat{\xi}_i^k}\,,
\end{align}
where $\delta_{\widehat{\xi}_i^k}$ is the unit point mass at $\widehat{\xi}_i^k$.
However, such an estimator may not accurately represent $\bbP_i$ when the number of gathered samples is small. 
Additionally, the single-point estimate may fail to capture plausible local perturbations around the true distribution $\bbP_i$.
To tackle both issues, client $i$ constructs a set of  plausible distributions, referred to as an ambiguity set, and then makes decisions against the worst-case distribution in the set.
Specifically, for a fixed $p\in[1,\infty)$, for each client $i\in[n]$, we define the local ambiguity set as a $p$-Wasserstein ball of radius $\varepsilon_i$ centered at $\widehat{\bbP}_i$:
\begin{align}\label{eq:local_ambiguity_set}
    \calW_i(\varepsilon_i):=\Bigl\{\bbQ\in\calM(\Xi):W_p(\bbQ,\widehat{\bbP}_i)\leq\varepsilon_i\Bigr\}\,,
\end{align}
where $\calM(\Xi)$ is the  space of probability distributions $\bbQ$ supported on $\Xi$ with finite $p$-th moment.
Moreover, $\varepsilon_i\ge 0$ is a client-dependent ambiguity radius, allowing different levels of local uncertainty across clients. 
To measure the distance between a candidate distribution \(\bbQ\) and the empirical distribution \(\widehat{\bbP}_i\), we use the \(p\)-Wasserstein distance, denoted by \(W_p(\cdot,\cdot)\) and defined below.
\begin{definition}[$p$-Wasserstein Distance~\cite{kuhn2019wasserstein}]~\label{def:Wp-metric}
For any $p\in[1,+\infty)$, let $\bbP$ and $\bbP^\prime$ be two probability distributions on a Polish metric space $(\Xi,d)$ with finite moments of order $p$.
Let $\Gamma(\bbP,\bbP^\prime)$ denote the set of all couplings $\gamma\in\calM(\Xi\times\Xi)$ having first marginal $\bbP$ and second marginal $\bbP^\prime$, and let~$c:\Xi\times\Xi\to[0,\infty),c(x,y):=d(x,y)^p$ denote the transportation cost.
The $p$-Wasserstein distance between $\bbP$ and $\bbP^\prime$ is defined as
\begin{align}\label{eq:def_wp}
W_p(\bbP,\bbP^\prime)
\;:=\;
\Bigl(\inf_{\gamma\in\Gamma(\bbP,\bbP^\prime)}
\int_{\Xi\times\Xi} c(x,y)\, \mathrm d\gamma(x,y)\Bigr)^{\frac{1}{p}}\,.
\end{align}
\end{definition}%
Throughout this paper, we use the Euclidean ground metric $d(\xi,\zeta)=\|\xi-\zeta\|$, so that $c(\xi,\zeta)=\|\xi-\zeta\|^p$.

\subsection{Cross-client mixture uncertainty}
Following the mixture structure in~\eqref{eq:mixture4ground_truth}, we allow each local distribution to vary within its Wasserstein ambiguity set. For fixed weights $\alpha\in\Delta_{n-1}$, this yields the mixture of local ambiguity sets:
\begin{align}\label{eq:global_ambiguity}
    \calW_{\mathrm M}(\alpha):=\Big\{\mathbb{Q}=\sum_{i=1}^n\alpha_i\mathbb{Q}_i:\ \mathbb{Q}_i\in\calW_i(\varepsilon_i),\forall i\in[n]\Big\}\,,
\end{align}
where $\alpha_i$ is client $i$'s weight.
Ideally, if the true weights $\alpha^\star$  were known, one would use $\mathcal{W}_{\mathrm M}(\alpha^\star)$ to construct the global ambiguity set.
However, since $\alpha^\star$ is unavailable in practice, a natural robust approach is to consider all mixtures induced by weights in the feasible set $\Delta_{n-1}$. Accordingly, we define the global ambiguity set as a union:
   \begin{align}
        \calW_{\mathrm G}
:&=\bigcup\nolimits_{\alpha\in\Delta_{n-1}}\calW_{\mathrm M}(\alpha)
\notag\\
&=\Big\{\sum_{i=1}^n\alpha_i\bbQ_i:\ \alpha\in\Delta_{n-1},\ \bbQ_i\in\calW_i(\varepsilon_i)\Big\}.\label{eq:global_ambiguity_set}
   \end{align}
This construction accounts for uncertainty in local distribution estimates through $\calW_i(\varepsilon_i)$ and in client proportions through the union over $\alpha\in\Delta_{n-1}$. See Fig.~\ref{fig:ambiguity_set} for the proposed global ambiguity set with $n=6$ clients.

\subsection{Problem formulation}
\begin{figure}[t]
    \centering
    \includegraphics[width=0.75\columnwidth]{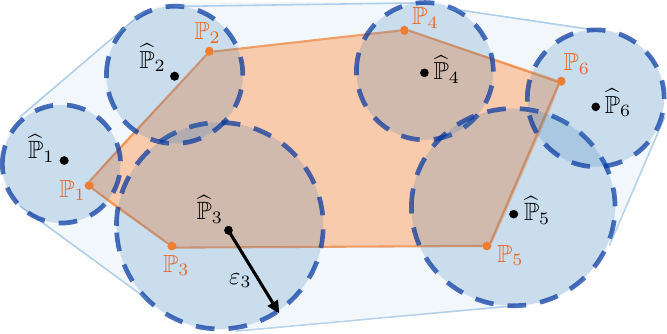}
    \caption{Illustration of the ambiguity set $\calW_{\mathrm G}$ defined in~\eqref{eq:global_ambiguity_set} for $n=6$ clients. For client $i$, the orange and black points denote $\bbP_i$ and $\widehat{\bbP}_i$, respectively. Blue dashed balls represent $\calW_i(\varepsilon_i)$, centered at $\widehat{\bbP}_i$. The orange region represents mixtures of the true client distributions.
    }
    \label{fig:ambiguity_set}
\end{figure}
With the global ambiguity set $\calW_{\mathrm G}$ in place, we consider the following mixing Wasserstein-ball  distributionally robust optimization (MW-DRO) problem:
\begin{align}\tag{MW-DRO}\label{eq:problem_DRO}
    \inf_{x\in\calX}\sup\limits_{\bbQ\in\calW_{\mathrm G}} \ev_{\bbQ}[\ell(x,\xi)]\,,
\end{align}
where $x\in\calX\subseteq\bbR^d$ is the decision variable. 
The goal is to seek a solution that is robust to both within-client distributional ambiguity and uncertainty in the client mixture weights.
Let $\widehat{x}^\star$ denote a DRO optimizer of~\eqref{eq:problem_DRO}, and define the DRO optimal value as $\widehat{J}^\star$.
We further define $J:=\ev_{\bbP}[\ell(\widehat{x}^\star,\xi)]$ as the true expected cost at the same decision $\widehat{x}^\star$ under the true distribution $\bbP$. 
This leads to two central questions in this paper:
\begin{enumerate}
    \item \textbf{Out-of-sample guarantee:} 
    How well does the DRO solution perform under the true distribution $\bbP$?
    \item \textbf{Efficient computation:} Can we solve the problem~\eqref{eq:problem_DRO} efficiently in a federated setting?
\end{enumerate}
\section{Out-of-Sample Performance Guarantees}
In this section, we evaluate the performance of the solution of~\eqref{eq:problem_DRO} under the true group distribution~$\bbP$. 
First, we establish a finite-sample \emph{coverage} guarantee ensuring that~$\bbP$ belongs to the proposed global ambiguity set with high probability. This coverage result yields an explicit bound on the \emph{out-of-sample performance gap}, i.e., the difference between the true expected cost achieved by a DRO optimizer and the DRO optimal value.
\subsection{Coverage of the ambiguity set}
The following \textcolor{black}{lemma} provides a local coverage guarantee by quantifying the probability that the true local distribution $\bbP_i$ belongs to the corresponding ambiguity set.
This result serves as a key building block for establishing coverage of the global ambiguity set.
\begin{lemma}[Measure Concentration~{\cite[Proposition 4.2]{boskos2024highconfidence}}]\label{lm:concentration}
    For each client $i$, consider the dataset $\calD_i$ sampled on a compact space $\Xi\subseteq\bbR^m$.
    \textcolor{black}{Then,} for any $p\geq 1$, $N_i\geq 1$, and confidence level $1-\beta_i$ with $\beta_i\in(0,1)$, we have:
        \begin{align}\label{eq:locao_coverage}
            \bbP^{N_i}\{\bbP_i\in \calW_i\!(\varepsilon_i^{N_i}(\beta_i))\!\}&=\bbP^{N_i}\{W_p(\bbP_i,\widehat{\bbP}_i)\leq \varepsilon_i^{N_i}(\beta_i)\}\nonumber\\
            &\geq 1-\beta_i\,,
        \end{align}
    where
\begin{small}
    \begin{align}\label{eq:radius}
    \varepsilon_i^{N_i}(\beta_i):=\begin{cases}
R\left[h^{-1}\!\left(\dfrac{\ln(c_1\beta_i^{-1})}{c_2 N_i}\right)\right]^{\frac{1}{p}},
& \text{if } p=\dfrac{m}{2},\\[1.2ex]
R\left(\dfrac{\ln(c_1\beta_i^{-1})}{c_2N_i}\right)^{\frac{1}{\max\{2p,m\}}},
& \text{if } p\neq\dfrac{m}{2},
\end{cases}
\end{align}
\end{small}%
with $R := \frac{1}{2}\,\mathrm{diam}_{\infty}\!\big(\Xi\big)$ and $h(x) := x^2/\left(\ln(2+1/x)\right)^2,\; x>0$. The constants $c_1$ and $c_2$ depend only on $p$ and $m$. Here, $\mathrm{diam}_{\infty}$ denotes the diameter induced by the infinity norm.
\end{lemma}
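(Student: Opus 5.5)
This lemma is quoted from~\cite[Proposition 4.2]{boskos2024highconfidence}, so the plan is to reconstruct the standard argument. It combines a bound on the \emph{expected} Wasserstein distance between $\widehat{\bbP}_i$ and $\bbP_i$ with a concentration inequality around that expectation. The first equality in~\eqref{eq:locao_coverage} is immediate: by~\eqref{eq:local_ambiguity_set}, $\bbP_i\in\calW_i(\varepsilon)$ holds if and only if $W_p(\bbP_i,\widehat{\bbP}_i)\le\varepsilon$, because $\bbP_i$ is supported on the compact set $\Xi$ and so has finite $p$-th moment. The content of the lemma is therefore the tail bound on the random variable $W_p^p(\bbP_i,\widehat{\bbP}_i)$ under the product measure of the $N_i$ i.i.d.\ samples.

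\textbf{Step 1 (normalization).} I would first reduce to the unit cube. Translate and scale $\Xi$ into a cube of half side-length $R=\tfrac12\mathrm{diam}_\infty(\Xi)$. Wasserstein distances scale linearly, $W_p\mapsto R\,W_p$, which accounts for the prefactor $R$ in~\eqref{eq:radius}.

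\textbf{Step 2 (mean bound).} On the unit cube I would invoke a Fournier--Guillin type estimate of the form
\[
\ev\bigl[W_p^p(\bbP_i,\widehat{\bbP}_i)\bigr]\le C\,a(N_i),
\qquad
a(N)=\begin{cases} N^{-1/2}, & p>m/2,\\ N^{-1/2}\ln(1+N), & p=m/2,\\ N^{-p/m}, & p<m/2.\end{cases}
\]
It is proved by a dyadic partition of the cube. At each level one bounds the transport cost by the total variation mismatch of cell masses, and each cell count is binomial with variance at most $\bbP_i(\text{cell})/N$.

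\textbf{Step 3 (concentration).} Since $\Xi$ is bounded, changing one sample changes $W_p^p$ by at most $\mathrm{diam}^p/N_i$. McDiarmid's bounded-differences inequality then gives a sub-Gaussian deviation from the mean. The sharper exponential bound $c_1\exp(-c_2N_i t^{\max\{2p,m\}/p})$ comes from a Bernstein-type estimate applied level by level in the dyadic decomposition, as in Fournier--Guillin's concentration theorem. In the critical case $p=m/2$ the rate is instead $c_1\exp(-c_2N_i\,h(t))$, where the logarithmic factor in $h$ mirrors the logarithm in the mean bound.

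\textbf{Step 4 (inversion).} Set the tail bound equal to $\beta_i$ and solve for the threshold $t$ on $W_p^p$, then take $p$-th roots. For $p\ne m/2$ this gives $t^{\max\{2p,m\}/p}=\ln(c_1\beta_i^{-1})/(c_2N_i)$, which yields the power $1/\max\{2p,m\}$ in~\eqref{eq:radius}. For $p=m/2$ it gives $t=h^{-1}(\ln(c_1\beta_i^{-1})/(c_2N_i))$, with $h$ invertible because it is increasing on $(0,\infty)$.

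The main obstacle is Step 3. One must obtain a tail bound valid for \emph{all} $N_i\ge1$ and all $t$, with constants depending only on $p$ and $m$, and this must hold uniformly over every distribution on the cube. Handling the critical case $p=m/2$ requires care with the logarithmic terms. Here I would rely on the cited result of~\cite{boskos2024highconfidence}, which refines Fournier--Guillin for compactly supported measures, rather than re-deriving the constants.
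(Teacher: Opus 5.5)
Your proposal is correct and matches the paper. The paper gives no independent proof and simply imports the bound from \cite[Proposition~4.2]{boskos2024highconfidence}, and your outline (rescaling to a cube of half side-length $R$, a Fournier--Guillin tail bound for $W_p^p$, then inversion in $\beta_i$) is the argument behind that citation; like the paper, you defer the constants to it.

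One remark: Step~3 is less of an obstacle than you suggest. On a compact set, your McDiarmid bound combined with the Step~2 mean bound already gives $c_1\exp(-c_2N_i t^{\max\{2,m/p\}})$ for all $t>0$ (and $c_1\exp(-c_2N_i h(t))$ when $p=m/2$), once $c_1$ is enlarged so the bound is trivial for $t$ below twice the mean bound. No level-by-level Bernstein estimate is needed.
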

Throughout the remainder of this section, we set $\varepsilon_i:=\varepsilon_i^{N_i}(\beta_i)$, as defined in~\eqref{eq:radius}.
To establish coverage of the global ambiguity set, a baseline guarantee requires simultaneous local coverage events, which can be conservative for nearly homogeneous local distributions. To exploit the distributional structure, we introduce the following bounded-heterogeneity assumption.
\begin{assumption}~\label{ass:bounded_heterogeneity}
    Suppose there exists a constant $\delta\ge 0$ such that for all $i,j\in[n]$, $
W_p(\mathbb P_i,\mathbb P_j)\le \delta$.
\end{assumption}
With small heterogeneity $\delta$, one local ambiguity set may cover all true distributions, yielding a tighter bound.
Proposition~\ref{prop:coverage} combines the structure-aware result with the baseline by taking their maximum.
\begin{proposition}[Coverage of the Global Ambiguity Set]~\label{prop:coverage}
Let Assumption~\ref{ass:bounded_heterogeneity} hold.
For any $p\geq 1$, given $N_i\geq 1$, $\beta_i\in(0,1)$, and the radius $\varepsilon_i$ in~\eqref{eq:radius} for each client $i\in[n]$,
\textcolor{black}{the global ambiguity set} $\calW_{\mathrm G}$ defined in~\eqref{eq:global_ambiguity_set} satisfies
\begin{align}\label{eq:global_coverage}
    \mathrm{Pr}\{\bbP\in\calW_{\mathrm G}\}\geq 1-\beta\,,
\end{align} 
where $(1-\beta):=\max\Big\{\prod_{i=1}^n(1-\beta_i),~1-\prod_{i=1}^n\bar{\beta}_i\Big\}$.
\textcolor{black}{If $\varepsilon_i>\delta$, define $\bar\beta_i\in(0,1)$ by}
\[
\varepsilon_i^{N_i}(\bar\beta_i)=\varepsilon_i-\delta.
\]
\textcolor{black}{If this equation has no solution in $(0,1)$ or $\varepsilon_i\le\delta$, set $\bar\beta_i=1$.}
\end{proposition}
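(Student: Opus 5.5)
The plan is to prove two separate lower bounds on $\mathrm{Pr}\{\bbP\in\calW_{\mathrm G}\}$ and then take their maximum. Each bound comes from an event that implies $\bbP\in\calW_{\mathrm G}$. Throughout, probabilities are taken with respect to the product measure over the mutually independent datasets $\calD_1,\ldots,\calD_n$.

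\emph{Baseline bound.} Let $E_i:=\{W_p(\bbP_i,\widehat{\bbP}_i)\le\varepsilon_i\}$. On $\bigcap_i E_i$ we have $\bbP_i\in\calW_i(\varepsilon_i)$ for every $i$. Choosing $\alpha=\alpha^\star\in\Delta_{n-1}$ and $\bbQ_i=\bbP_i$ in \eqref{eq:global_ambiguity_set} then shows $\bbP=\sum_i\alpha_i^\star\bbP_i\in\calW_{\mathrm M}(\alpha^\star)\subseteq\calW_{\mathrm G}$. Each event $E_i$ depends only on $\calD_i$, so the events are independent. Lemma~\ref{lm:concentration} then gives $\mathrm{Pr}\{\bbP\in\calW_{\mathrm G}\}\ge\prod_i\mathrm{Pr}(E_i)\ge\prod_i(1-\beta_i)$.

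\emph{Structure-aware bound.} Fix $i$ with $\bar\beta_i<1$, so that $\varepsilon_i>\delta$ and $\varepsilon_i^{N_i}(\bar\beta_i)=\varepsilon_i-\delta$. Let $F_i:=\{W_p(\bbP_i,\widehat{\bbP}_i)\le\varepsilon_i-\delta\}$. By Lemma~\ref{lm:concentration} applied with confidence $\bar\beta_i$, $\mathrm{Pr}(F_i)\ge1-\bar\beta_i$.

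On $F_i$ I will show $\bbP\in\calW_i(\varepsilon_i)$. Once this holds, taking $\alpha=e_i$ (the $i$-th vertex of the simplex) and $\bbQ_i=\bbP$, with the other $\bbQ_j$ arbitrary in $\calW_j(\varepsilon_j)$, gives $\bbP\in\calW_{\mathrm G}$. The sets $\calW_j(\varepsilon_j)$ are nonempty since they contain $\widehat{\bbP}_j$.

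The key estimate is $W_p(\bbP,\bbP_i)\le\delta$. For this I would use the joint convexity of $W_p^p$ under mixtures: combining optimal couplings $\gamma_j\in\Gamma(\bbP_j,\bbP_i)$ gives $\sum_j\alpha^\star_j\gamma_j\in\Gamma(\bbP,\bbP_i)$. Hence
\begin{align*}
W_p^p(\bbP,\bbP_i)\le\sum_j\alpha_j^\star W_p^p(\bbP_j,\bbP_i)\le\delta^p
\end{align*}
by Assumption~\ref{ass:bounded_heterogeneity}. Optimal couplings exist on the compact set $\Xi$; alternatively one can use $\epsilon$-optimal couplings and let $\epsilon\to0$. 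The triangle inequality for $W_p$ then yields
\begin{align*}
W_p(\bbP,\widehat{\bbP}_i)\le\delta+(\varepsilon_i-\delta)=\varepsilon_i
\end{align*}
on $F_i$.

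Consequently $\{\bbP\in\calW_{\mathrm G}\}\supseteq\bigcup_iF_i$, where indices with $\bar\beta_i=1$ contribute trivially because $\mathrm{Pr}(F_i)\ge0=1-\bar\beta_i$. The $F_i$ are independent, being determined by distinct datasets. Therefore
\begin{align*}
\mathrm{Pr}\Big(\bigcup_iF_i\Big)=1-\prod_i\mathrm{Pr}(F_i^c)\ge1-\prod_i\bar\beta_i.
\end{align*}

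Combining the two bounds gives \eqref{eq:global_coverage}. The main obstacle is the mixture-convexity step for $W_p^p$ together with the correct use of independence in the union bound. A secondary issue is that the probability of $F_i$ must be controlled through Lemma~\ref{lm:concentration} at level $\bar\beta_i$, which is exactly how $\bar\beta_i$ is defined. Measurability of these events is standard and can be noted without further argument.
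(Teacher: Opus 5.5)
Your proof is correct and follows the paper's argument. It uses the same two sufficient events, independence of the datasets for the product and union bounds, and mixture convexity of $W_p^p$ with the triangle inequality for the single-ball event; the only difference is ordering, since you bound $W_p(\bbP,\bbP_i)\le\delta$ by convexity first, whereas the paper first places every $\bbP_j$ in $\calW_i(\varepsilon_i)$ and then applies convexity centered at $\widehat{\bbP}_i$.
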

\ifdefined\shortpaper\else
\begingroup

\begin{proof}
We establish global coverage through two sufficient conditions: simultaneous coverage of the corresponding local distributions, and coverage of all true local distributions by a single local ambiguity set. All probabilities below refer to the joint sampling of the client datasets.

\noindent\textit{First condition: simultaneous local coverage.}
Suppose that each local ambiguity set contains its corresponding true distribution. For each $i\in[n]$, define
\[
E_i:=\{\bbP_i\in\calW_i(\varepsilon_i)\}
=\{W_p(\bbP_i,\widehat{\bbP}_i)\le\varepsilon_i\}.
\]
On $\bigcap_iE_i$, the true mixture weights give
\[
\bbP=\sum_{i=1}^n\alpha_i^\star\bbP_i
\in\calW_{\mathrm M}(\alpha^\star)
\subseteq\calW_{\mathrm G}.
\]
Lemma~\ref{lm:concentration} yields $\Pr(E_i)\ge1-\beta_i$. Each $E_i$ depends only on $\calD_i$, so the independence of the datasets implies
\begin{align*}
\Pr\{\bbP\in\calW_{\mathrm G}\}
&\ge\Pr\Bigl(\bigcap_{i=1}^nE_i\Bigr)\\
&=\prod_{i=1}^n\Pr(E_i)
\ge\prod_{i=1}^n(1-\beta_i).
\end{align*}

\noindent\textit{Second condition: one local set covers all true distributions.}
Sufficiently small heterogeneity allows a single local ambiguity set to contain all true local distributions, provided that its empirical estimate is sufficiently accurate. Define
\[
E_i^\sharp:=\begin{cases}
\{W_p(\widehat{\bbP}_i,\bbP_i)\le\varepsilon_i-\delta\},
&\varepsilon_i>\delta,\\
\varnothing,&\varepsilon_i\le\delta.
\end{cases}
\]
On $E_i^\sharp$, Assumption~\ref{ass:bounded_heterogeneity} and the triangle inequality give, for every $j\in[n]$,
\begin{align}
W_p(\widehat{\bbP}_i,\bbP_j)
&\le W_p(\widehat{\bbP}_i,\bbP_i)+W_p(\bbP_i,\bbP_j)\notag\\
&\le(\varepsilon_i-\delta)+\delta=\varepsilon_i.
\end{align}
Thus, $\calW_i(\varepsilon_i)$ contains every $\bbP_j$. Convexity of $W_p^p$ in its distribution argument further yields
\[
W_p^p(\widehat{\bbP}_i,\bbP)
\le\sum_{j=1}^n\alpha_j^\star W_p^p(\widehat{\bbP}_i,\bbP_j)
\le\varepsilon_i^p,
\]
so the same local ambiguity set contains their true mixture $\bbP$. Since the simplex permits assigning all weight to client $i$, we have $\calW_i(\varepsilon_i)\subseteq\calW_{\mathrm G}$. Consequently, it suffices that $E_i^\sharp$ holds for at least one client.

If $\bar\beta_i<1$, its defining equation and Lemma~\ref{lm:concentration} give $\Pr(E_i^\sharp)\ge1-\bar\beta_i$. If $\bar\beta_i=1$, this inequality holds trivially, including when $\varepsilon_i\le\delta$ or the defining equation has no solution. Each $E_i^\sharp$ depends only on $\calD_i$, so independence gives
\begin{align*}
\Pr\{\bbP\in\calW_{\mathrm G}\}
&\ge\Pr\Bigl(\bigcup_{i=1}^nE_i^\sharp\Bigr)\\
&=1-\prod_{i=1}^n\Pr\bigl((E_i^\sharp)^c\bigr)\\
&\ge1-\prod_{i=1}^n\bar\beta_i.
\end{align*}
Both arguments bound the probability of the same global-coverage event from below. Taking the larger of the two bounds proves~\eqref{eq:global_coverage}.
\end{proof}
\endgroup
\fi
\subsection{Performance evaluation under the true group distribution}
We now connect the coverage result to the out-of-sample performance of the solution of~\eqref{eq:problem_DRO}.
The coverage implies that on the event $\{\bbP\in\calW_{\mathrm G}\}$, the worst-case expected cost in~\eqref{eq:problem_DRO} upper-bounds the true expected cost for any ${x\in\calX}$:
\begin{align}\label{eq:upper_worst}
    \ev_{\bbP}[\ell(x,\xi)] \;\le\; \sup_{\bbQ\in\calW_{\mathrm G}}\ev_{\bbQ}[\ell(x,\xi)]\,.
\end{align}
For the DRO solution $\widehat{x}^\star$, Proposition~\ref{prop:coverage} implies that the DRO optimal value $\widehat{J}^\star$ is a certificate for the true cost $J$ with high probability.
We next bound the out-of-sample performance gap $\widehat{J}^\star-J$ using the following auxiliary lemmas.
\begin{lemma}[\textcolor{black}{Simplex-weight inequality}]\label{lm:inequality}
    Let $\alpha^{\mathrm{I}},\alpha^{\mathrm{II}}\in\Delta_{n-1}$ and $b\in\bbR^n$, and let $\|\cdot\|_1$ denote the $\ell_1$-norm. Then, the following inequality holds:
    \begin{align}
        |(\alpha^{\mathrm{I}}-\alpha^{\mathrm{II}})^\top b|\leq\frac{1}{2}\|\alpha^{\mathrm{I}}-\alpha^{\mathrm{II}}\|_1\Bigl(\max_{i}b_i-\min_i b_i\Bigr)\,.
    \end{align}
\end{lemma}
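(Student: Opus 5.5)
The plan is to exploit that the difference of two simplex vectors has zero total sum. Set $v:=\alpha^{\mathrm{I}}-\alpha^{\mathrm{II}}$. Since both vectors lie in $\Delta_{n-1}$, we have $\sum_i v_i=0$. Hence, for any scalar $c\in\bbR$,
\[
v^\top b=v^\top(b-c\mathbf{1}),
\]
where $\mathbf{1}$ denotes the all-ones vector. We will choose the midpoint $c:=\tfrac12(\max_i b_i+\min_i b_i)$. With this choice every entry satisfies $|b_i-c|\le \tfrac12(\max_i b_i-\min_i b_i)$.

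Applying Hölder's inequality ($\ell_1$--$\ell_\infty$) then gives
\[
|v^\top b|\le\|v\|_1\,\|b-c\mathbf{1}\|_\infty\le\tfrac12\|v\|_1\Bigl(\max_i b_i-\min_i b_i\Bigr),
\]
which is the claimed inequality.

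An equivalent route, if one prefers to avoid the choice of $c$, is to split $v$ into its positive and negative parts $v^+$ and $v^-$. The zero-sum property gives $\|v^+\|_1=\|v^-\|_1=\tfrac12\|v\|_1$. Then
\[
v^\top b=(v^+)^\top b-(v^-)^\top b\le \|v^+\|_1\max_i b_i-\|v^-\|_1\min_i b_i=\tfrac12\|v\|_1\Bigl(\max_i b_i-\min_i b_i\Bigr).
\]
The lower bound follows symmetrically, for example by exchanging $\alpha^{\mathrm{I}}$ and $\alpha^{\mathrm{II}}$.

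There is no real obstacle here. The only point requiring care is to invoke $\sum_i v_i=0$, which holds because both weight vectors sum to one. This is the step that produces the factor $\tfrac12$ rather than the cruder bound $\|v\|_1\max_i|b_i|$.
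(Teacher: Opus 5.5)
Your main argument is correct and essentially matches the paper's proof: both use $\sum_i(\alpha^{\mathrm{I}}_i-\alpha^{\mathrm{II}}_i)=0$ to shift $b$ by the midpoint $c=\tfrac12(\max_i b_i+\min_i b_i)$ and then bound termwise, with your $\ell_1$--$\ell_\infty$ H\"older step playing the role of the paper's triangle-inequality step. The positive/negative-part alternative you sketch is also valid.
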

\ifdefined\omitinequalitylemmaproof\else
\begin{proof}
    Since $\alpha^{\mathrm{I}},\alpha^{\mathrm{II}}\in\Delta_{n-1}$, we have $\sum_{i=1}^n(\alpha^{\mathrm{I}}_i-\alpha^{\mathrm{II}}_i)=0$.  Thus, for any $c\in\bbR$, we have
\begin{align}
            (\alpha^{\mathrm{I}}-\alpha^{\mathrm{II}})^\top b&=\sum_{i=1}^n(\alpha^{\mathrm{I}}_i-\alpha^{\mathrm{II}}_i)b_i-c\sum_{i=1}^n(\alpha^{\mathrm{I}}_i-\alpha^{\mathrm{II}}_i)\notag\\&=\sum_{i=1}^n(\alpha^{\mathrm{I}}_i-\alpha^{\mathrm{II}}_i)(b_i-c).\label{eq:lm_inequality}
\end{align}
    By choosing $c=(\max_ib_i+\min_ib_i)/{2}$, the inequality
    \begin{align}\label{eq:lm_inequality_2}
        |b_i-c|\leq \frac{1}{2}(\max_ib_i-\min_ib_i)
    \end{align}
    holds for all $i\in[n]$.
    As a consequence, 
taking absolute values
 on both sides of~\eqref{eq:lm_inequality} leads to
\begin{align}
             |(\alpha^{\mathrm{I}}-\alpha^{\mathrm{II}})^\top b|&=\Bigr|\sum_{i=1}^n(\alpha^{\mathrm{I}}_i-\alpha^{\mathrm{II}}_i)(b_i-c)\Bigr|\notag\\
          &   \leq \sum_{i=1}^n|\alpha^{\mathrm{I}}_i-\alpha^{\mathrm{II}}_i|\cdot|b_i-c|\notag\\
          &\leq \frac{1}{2}(\max_ib_i-\min_ib_i)\sum_{i=1}^n|\alpha^{\mathrm{I}}_i-\alpha^{\mathrm{II}}_i|,
\end{align}
    where the last inequality follows from~\eqref{eq:lm_inequality_2}.
\end{proof}
\fi
The union and mixture constructions of $\calW_{\mathrm G}$ enable a natural decomposition of the worst-case expectation objective.
\textcolor{black}{Lemma~\ref{lm:separability}} formalizes this separability and yields an equivalent client-wise reformulation of~\eqref{eq:problem_DRO}.
\begin{lemma}[Problem Separability]~\label{lm:separability}
The problem~\eqref{eq:problem_DRO} admits the following reformulation:
\begin{align}
    &\inf_{x\in\calX} \sup_{\mathbb{Q} \in \calW_{\mathrm G}} \mathbb{E}_{\mathbb{Q}} \left[ \ell(x,\xi) \right]\nonumber
\\
=&~ \inf_{x\in\calX}\sup_{\alpha\in\Delta_{n-1}} \sum_{i=1}^n \alpha_i 
\sup_{\bbQ_i\in\calW_i(\varepsilon_i)}
\mathbb{E}_{\mathbb{Q}_i} \left[ \ell(x,\xi) \right]. \label{eq:problem_separability}
\end{align}
\end{lemma}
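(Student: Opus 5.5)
The plan is to fix an arbitrary $x\in\calX$ and prove the pointwise identity
\[
\sup_{\bbQ\in\calW_{\mathrm G}}\ev_{\bbQ}[\ell(x,\xi)]=\sup_{\alpha\in\Delta_{n-1}}\sum_{i=1}^n\alpha_i\sup_{\bbQ_i\in\calW_i(\varepsilon_i)}\ev_{\bbQ_i}[\ell(x,\xi)].
\]
Taking $\inf_{x\in\calX}$ on both sides then gives~\eqref{eq:problem_separability}. First, $\ell(x,\cdot)$ is continuous on the compact set $\Xi$, so it is bounded. Hence every expectation appearing above is finite, and each inner supremum $v_i(x):=\sup_{\bbQ_i\in\calW_i(\varepsilon_i)}\ev_{\bbQ_i}[\ell(x,\xi)]$ is a finite real number. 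Each $\calW_i(\varepsilon_i)$ is nonempty because it contains $\widehat{\bbP}_i$.

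Next, I would use the union structure in~\eqref{eq:global_ambiguity_set}. A supremum over a union is the supremum of the suprema over its pieces, so
\[
\sup_{\bbQ\in\calW_{\mathrm G}}\ev_{\bbQ}[\ell]=\sup_{\alpha\in\Delta_{n-1}}\sup_{\bbQ\in\calW_{\mathrm M}(\alpha)}\ev_{\bbQ}[\ell].
\]
For fixed $\alpha$, every $\bbQ\in\calW_{\mathrm M}(\alpha)$ has the form $\sum_i\alpha_i\bbQ_i$ with $\bbQ_i\in\calW_i(\varepsilon_i)$. Expectation is linear in the measure, so $\ev_{\bbQ}[\ell]=\sum_i\alpha_i\ev_{\bbQ_i}[\ell]$. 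The supremum is therefore taken over the product set $\prod_i\calW_i(\varepsilon_i)$, and the constraints on the $\bbQ_i$ are decoupled across clients.

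The remaining step is to show that the supremum of a nonnegatively weighted sum of decoupled terms equals the weighted sum of the individual suprema, i.e.
\[
\sup_{\bbQ\in\calW_{\mathrm M}(\alpha)}\ev_{\bbQ}[\ell]=\sum_i\alpha_iv_i(x).
\]
The bound ``$\le$'' is immediate because $\alpha_i\ge0$. For ``$\ge$'', I would argue by $\epsilon$-approximation rather than assuming the suprema are attained. Given $\epsilon>0$, pick $\bbQ_i^\epsilon\in\calW_i(\varepsilon_i)$ with $\ev_{\bbQ_i^\epsilon}[\ell]\ge v_i(x)-\epsilon$. The mixture $\sum_i\alpha_i\bbQ_i^\epsilon$ lies in $\calW_{\mathrm M}(\alpha)$ and achieves at least $\sum_i\alpha_iv_i(x)-\epsilon$, using $\sum_i\alpha_i=1$. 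Letting $\epsilon\to0$ gives the reverse bound.

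This interchange is the main, though mild, technical point; finiteness of the $v_i(x)$, from boundedness of $\ell$, is what makes the $\epsilon$-argument valid. I would also check that the mixture $\sum_i\alpha_i\bbQ_i$ is a probability measure on $\Xi$ with finite $p$-th moment, which holds trivially since $\Xi$ is compact. Combining the three steps proves the pointwise identity, and hence the lemma.
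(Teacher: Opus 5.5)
Your proposal is correct and follows essentially the same route as the paper's proof: you write the supremum over the union as $\sup_{\alpha\in\Delta_{n-1}}\sup_{\bbQ\in\calW_{\mathrm M}(\alpha)}$, use linearity of the expectation in the measure, and exchange the supremum with the nonnegatively weighted sum because the $\bbQ_i$ are chosen independently. Your $\epsilon$-approximation argument, which relies on finiteness of each local supremum since $\ell(x,\cdot)$ is bounded on compact $\Xi$, makes rigorous the step that the paper justifies only verbally in its equality (a).
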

\begin{proof}
\ifdefined\shortpaper\else
\textcolor{black}{For a fixed $x\in\calX$, the union structure of $\calW_{\mathrm G}$ gives}
\begin{align}\label{eq:separable_sup}
&\sup_{\bbQ\in\calW_{\mathrm G}}\ev_{\bbQ}[\ell(x,\xi)]\notag\\
&=\textcolor{black}{\sup_{\bbQ\in\bigcup_{\alpha\in\Delta_{n-1}}\calW_{\mathrm M}(\alpha)}}
\ev_{\bbQ}[\ell(x,\xi)]\notag\\
&=\sup_{\alpha\in\Delta_{n-1}}
\textcolor{black}{\sup_{\bbQ\in\calW_{\mathrm M}(\alpha)}}
\ev_{\bbQ}[\ell(x,\xi)]\textcolor{black}{.}
\end{align}
\textcolor{black}{The first equality follows from~\eqref{eq:global_ambiguity_set}. The second holds because taking the supremum over a union is equivalent to taking the supremum over its index and then over the corresponding set.}

\textcolor{black}{For any fixed $\alpha\in\Delta_{n-1}$, following the decomposition argument in~\cite[Proposition~1]{ibrahim2025fdrsvm}, we have}
\begin{align}\label{eq:fixed_alpha_separability}
&\textcolor{black}{\sup_{\bbQ\in\calW_{\mathrm M}(\alpha)}}
\ev_{\bbQ}[\ell(x,\xi)]\notag\\
&=\sup_{\{\bbQ_i\in\calW_i(\varepsilon_i)\}_{i=1}^n}
\textcolor{black}{\ev_{\sum_{i=1}^n\alpha_i\bbQ_i}[\ell(x,\xi)]}\notag\\
&=\sup_{\{\bbQ_i\in\calW_i(\varepsilon_i)\}_{i=1}^n}
\sum_{i=1}^n\alpha_i\ev_{\bbQ_i}[\ell(x,\xi)]\notag\\
&\overset{\mathrm{(a)}}{=}\sum_{i=1}^n\alpha_i
\sup_{\bbQ_i\in\calW_i(\varepsilon_i)}
\ev_{\bbQ_i}[\ell(x,\xi)]\textcolor{black}{,}
\end{align}
\textcolor{black}{where the first equality follows from the definition of $\calW_{\mathrm M}(\alpha)$ in~\eqref{eq:global_ambiguity}, and the second follows from the linearity of integration with respect to the measure. Equality (a) holds because each $\bbQ_i$ can be chosen independently within its local ambiguity set, affects only its corresponding summand, and has a nonnegative weight $\alpha_i$. Substituting~\eqref{eq:fixed_alpha_separability} into~\eqref{eq:separable_sup} and taking the infimum over $x\in\calX$ completes the proof.}
\fi
\end{proof}
\begin{theorem}[Out-of-Sample Performance]
    Let Assumption~\ref{ass:bounded_heterogeneity} hold.
   Assume that for all $x\in\calX$, the function $\xi\mapsto \ell(x,\xi)$ is $L_{\xi}$-Lipschitz.
    Under the same setting as in Proposition~\ref{prop:coverage}, with probability at least $\prod_{i=1}^n(1-\beta_i)$, we have
    \begin{align}
        \widehat{J}^\star-L_{\xi}(2r+\delta)\leq J\leq\widehat{J}^\star\textcolor{black}{,}
    \end{align}
    where $r:=\max_{i}\{\varepsilon_i\}$.
\end{theorem}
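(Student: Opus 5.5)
We work on the event $E:=\bigcap_{i=1}^n E_i$, where $E_i=\{W_p(\bbP_i,\widehat{\bbP}_i)\le\varepsilon_i\}$ as in the proof of Proposition~\ref{prop:coverage}. That proof already shows $\Pr(E)\ge\prod_{i=1}^n(1-\beta_i)$ and $E\subseteq\{\bbP\in\calW_{\mathrm G}\}$. The upper bound $J\le\widehat J^\star$ is then immediate: apply~\eqref{eq:upper_worst} at $x=\widehat x^\star$ and note that $\sup_{\bbQ\in\calW_{\mathrm G}}\ev_{\bbQ}[\ell(\widehat x^\star,\xi)]=\widehat J^\star$. The work is the lower bound $\widehat J^\star-J\le L_\xi(2r+\delta)$, which we prove at the fixed decision $\widehat x^\star$.

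By Lemma~\ref{lm:separability}, $\widehat J^\star=\sup_{\alpha\in\Delta_{n-1}}\sum_i\alpha_i b_i$, where $b_i:=\sup_{\bbQ_i\in\calW_i(\varepsilon_i)}\ev_{\bbQ_i}[\ell(\widehat x^\star,\xi)]$. This supremum over the simplex equals $\max_i b_i$. Also, $J=\sum_i\alpha_i^\star a_i$ with $a_i:=\ev_{\bbP_i}[\ell(\widehat x^\star,\xi)]$. We now compare the $b_i$ with the $a_j$ using Kantorovich--Rubinstein duality: since $\xi\mapsto\ell(\widehat x^\star,\xi)$ is $L_\xi$-Lipschitz and $W_1\le W_p$ for $p\ge1$, we have $|\ev_{\bbQ}[\ell]-\ev_{\bbQ'}[\ell]|\le L_\xi W_p(\bbQ,\bbQ')$.

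For any $\bbQ_i\in\calW_i(\varepsilon_i)$, on $E_i$ the triangle inequality gives $W_p(\bbQ_i,\bbP_i)\le W_p(\bbQ_i,\widehat{\bbP}_i)+W_p(\widehat{\bbP}_i,\bbP_i)\le2\varepsilon_i\le 2r$. Hence $b_i\le a_i+2L_\xi r$. By Assumption~\ref{ass:bounded_heterogeneity}, $W_p(\bbP_i,\bbP_j)\le\delta$, so $a_i\le a_j+L_\xi\delta$ for every $j$. Taking $i^\dagger\in\arg\max_i b_i$ and averaging the bound $b_{i^\dagger}\le a_j+L_\xi(2r+\delta)$ over $j$ with weights $\alpha_j^\star$ gives
\[
\widehat J^\star=\max_i b_i\le\sum_{j}\alpha_j^\star a_j+L_\xi(2r+\delta)=J+L_\xi(2r+\delta),
\]
which is the claimed lower bound.

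The main point needing care is the Lipschitz transfer from $W_p$ to expectations. Here $W_1\le W_p$ follows from Jensen's inequality on any coupling, and compactness of $\Xi$ guarantees all the expectations are finite. A second point is that the suprema $b_i$ need not be attained; this is harmless, because the bound $b_i\le a_i+2L_\xi r$ holds for every feasible $\bbQ_i$ and therefore for the supremum. Lemma~\ref{lm:inequality} is not needed in this route, though it would offer an alternative way to pass from a worst-case $\alpha$ to $\alpha^\star$ using the spread $\max_i a_i-\min_i a_i\le L_\xi\delta$.
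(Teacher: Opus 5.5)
Your proof is correct and uses the same ingredients as the paper: the event $E$, Kantorovich--Rubinstein duality with $W_1\le W_p$, the triangle inequality through $\widehat{\bbP}_i$, and Assumption~\ref{ass:bounded_heterogeneity}. It organizes the lower bound differently.

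\emph{The paper's route.} It first argues, via weak compactness of the local balls, that worst-case distributions $\bbQ_{\calW}^{(i)}$ and worst-case weights $\widehat\alpha$ are attained. It then splits $|\widehat J^\star-J|$ into a sampling term and a heterogeneity-and-weights term as in~\eqref{eq:OOS_separation}, and bounds the latter with Lemma~\ref{lm:inequality}. Because it normalizes the loss by $L_\xi$, it also has to treat $L_\xi=0$ separately.

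\emph{Your route.} You work directly with the possibly unattained suprema $b_i$ and bound them pointwise over each ball. You absorb the mismatch between the worst-case vertex and $\alpha^\star$ by averaging $a_{i^\dagger}\le a_j+L_\xi\delta$ against $\alpha^\star$. You in fact need only the easy direction $\widehat J^\star\le\max_i b_i$, since every element of $\calW_{\mathrm G}$ is a mixture $\sum_i\alpha_i\bbQ_i$. This is shorter and avoids the attainment argument, Lemma~\ref{lm:inequality}, and the case split.

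\emph{What each buys.} The paper's route keeps the intermediate bound $L_\xi\bigl(2\sum_i\widehat\alpha_i\varepsilon_i+\tfrac12\|\widehat\alpha-\alpha^\star\|_1\delta\bigr)$. This makes the role of the mixture-weight deviation explicit and underpins its remark that more data shrinks only the sampling term. Your argument contains the same split implicitly, as $b_{i^\dagger}-a_{i^\dagger}$ versus $a_{i^\dagger}-J$, so nothing is lost for the stated bound.
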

\ifdefined\shortpaper\else
\begin{proof}
    \textcolor{black}{We work on the event $E:=\bigcap_{i=1}^n\{W_p(\bbP_i,\widehat{\bbP}_i)\le\varepsilon_i\}$, on which local coverage holds simultaneously for all clients.} \textcolor{black}{Its probability} is at least $\prod_{i=1}^n(1-\beta_i)$\textcolor{black}{, as established in the proof of Proposition~\ref{prop:coverage}}. This event implies $\bbP\in\calW_{\mathrm G}$. Given the DRO optimizer $\widehat{x}^\star$, we have
\[
J=\ev_{\bbP}[\ell(\widehat{x}^\star,\xi)]
\le \sup_{\bbQ\in\calW_{\mathrm G}}\ev_{\bbQ}[\ell(\widehat{x}^\star,\xi)]
= \widehat{J}^\star,
\]
which proves the \textcolor{black}{right-hand side} inequality.

    \textcolor{black}{To bound $\widehat{J}^\star-J$, fix the observed data and $\widehat{x}^\star$. Each local ball $\calW_i(\varepsilon_i)$ contains $\widehat{\bbP}_i$ and is weakly closed in the weakly compact space $\calM(\Xi)$, since $\Xi$ is compact~\cite[Chapter~4 and Theorem~6.9]{villani2009optimal}. Thus, each local ball is nonempty and weakly compact. Since $\ell(\widehat{x}^\star,\cdot)$ is bounded and continuous, the expected-loss functional is continuous under weak convergence and attains a maximum on each local ball. Denote these local maximum values by $m_i$. By Lemma~\ref{lm:separability},}
\begin{align*}
\textcolor{black}{\sup_{\bbQ\in\calW_{\mathrm G}}\ev_{\bbQ}[\ell(\widehat{x}^\star,\xi)]}
&\textcolor{black}{=\sup_{\alpha\in\Delta_{n-1}}\sum_{i=1}^n\alpha_i m_i}\\
&\textcolor{black}{=\max_{i\in[n]}m_i.}
\end{align*}
\textcolor{black}{Choose any $j\in\operatorname*{arg\,max}_{i\in[n]}m_i$, assign unit weight to client $j$, and select its local worst-case distribution. This gives a feasible distribution attaining the global supremum. We can therefore choose worst-case weights $\widehat{\alpha}$ and a corresponding distribution $\bbQ_{\calW}$ as}
    \begin{subequations}
        \begin{align}
            \widehat{\alpha}&\textcolor{black}{\in\operatorname*{arg\,max}\limits_{\alpha\in\Delta_{n-1}}\max_{\bbQ\in\calW_{\mathrm M}(\alpha)}}
\ev_{\bbQ}[\ell(\widehat{x}^\star,\xi)]\,,\\
            \bbQ_\calW&\textcolor{black}{\in\operatorname*{arg\,max}\limits_{\bbQ\in\calW_{\mathrm M}(\widehat{\alpha})}}\mathbb{E}_{\bbQ}[\ell(\widehat{x}^\star,\xi)]\,.
        \end{align}
    \end{subequations}
\textcolor{black}{By Lemma~\ref{lm:separability}, we may choose these maximizers so that} $\bbQ_{\calW}=\sum_{i=1}^n\widehat{\alpha}_i\bbQ_{\calW}^{(i)}$,
where $\bbQ_{\calW}^{(i)}\textcolor{black}{\in\operatorname*{arg\,max}_{\bbQ_i\in\calW_i(\varepsilon_i)}}\ev_{\bbQ_i}[\ell(\widehat{x}^\star,\xi)]$ is \textcolor{black}{a} local worst-case distribution for client $i$.
Thus, the performance gap can be decomposed as
\begin{align}
        |\widehat{J}^\star-J|
        =&~|\ev_{\bbQ_\calW}[\ell(\widehat{x}^\star,\xi)]-\ev_\bbP[\ell(\widehat{x}^\star,\xi)]|\notag\\
        =&~|\sum_{i=1}^n\widehat{\alpha}_i \ev_{\bbQ_{\calW}^{(i)}}[\ell(\widehat{x}^\star,\xi)]-\sum_{i=1}^n\alpha_i^\star\ev_{\bbP_i}[\ell(\widehat{x}^\star,\xi)]|\notag\\
        =&~\Bigl|\sum_{i=1}^n\widehat{\alpha}_i \ev_{\bbQ_{\calW}^{(i)}}[\ell(\widehat{x}^\star,\xi)]-\sum_{i=1}^n\widehat{\alpha}_i\ev_{\bbP_i}[\ell(\widehat{x}^\star,\xi)]\notag\\
        &+\sum_{i=1}^n\widehat{\alpha}_i\ev_{\bbP_i}[\ell(\widehat{x}^\star,\xi)]-\sum_{i=1}^n\alpha_i^\star\ev_{\bbP_i}[\ell(\widehat{x}^\star,\xi)]\Bigr|\notag\\
        \leq&~ \underbrace{\Bigl|\sum_{i=1}^n\widehat{\alpha}_i \left(\ev_{\bbQ_{\calW}^{(i)}}[\ell(\widehat{x}^\star,\xi)]-\ev_{\bbP_i}[\ell(\widehat{x}^\star,\xi)]\right)\Bigr|}_{\text{sampling}}\notag\\
        &+\underbrace{\Bigl|\sum_{i=1}^n(\widehat{\alpha}_i-\alpha_i^\star)\ev_{\bbP_i}[\ell(\widehat{x}^\star,\xi)]\Bigr|}_{\text{ heterogeneity \& weights}}\,,\label{eq:OOS_separation}
\end{align}
where the second equality follows from arguments in Lemma~\ref{lm:separability}. 

\textcolor{black}{We next bound the two terms in~\eqref{eq:OOS_separation} on the event $E$. If $L_\xi=0$, then $\ell(\widehat{x}^\star,\cdot)$ is constant on $\Xi$, so both terms vanish and the claim follows. Suppose henceforth that $L_\xi>0$. Let $\|h\|_{\mathrm{Lip}}$ denote the smallest Lipschitz constant of $h:\Xi\to\mathbb R$.}
\textcolor{black}{Since $\ell(\widehat{x}^\star,\cdot)$ is $L_\xi$-Lipschitz, the normalized loss $\frac{1}{L_\xi}\ell(\widehat{x}^\star,\cdot)$ belongs to the class $\{h:\Xi\to\mathbb R:\|h\|_{\mathrm{Lip}}\leq1\}$. On $E$, the sampling term satisfies}
    \begin{align}\label{eq:bound1}
        &\Bigl|\sum_{i=1}^n\widehat{\alpha}_i \left(\ev_{\bbQ_{\calW}^{(i)}}[\ell(\widehat{x}^\star,\xi)]-\ev_{\bbP_i}[\ell(\widehat{x}^\star,\xi)]\right)\Bigr|\notag\\
        \leq&~\sum_{i=1}^n\widehat{\alpha}_i\Bigl|\ev_{\bbQ_{\calW}^{(i)}}[\ell(\widehat{x}^\star,\xi)]-\ev_{\bbP_i}[\ell(\widehat{x}^\star,\xi)]\Bigr|\notag\\
        \textcolor{black}{=}&~\textcolor{black}{L_\xi\sum_{i=1}^n\widehat\alpha_i\Bigl|\ev_{\bbQ_{\calW}^{(i)}}\Bigl[\frac{1}{L_\xi}\ell(\widehat{x}^\star,\xi)\Bigr]}\notag\\
        &\textcolor{black}{\qquad\qquad{}-\ev_{\bbP_i}\Bigl[\frac{1}{L_\xi}\ell(\widehat{x}^\star,\xi)\Bigr]\Bigr|}\notag\\
        \textcolor{black}{\overset{\mathrm{(a)}}{\leq}}&~ L_{\xi}\sum_{i=1}^n\widehat{\alpha}_i\sup_{\|h\|_{\mathrm{Lip}}\leq 1}\Bigl|\textcolor{black}{\ev_{\bbQ_{\calW}^{(i)}}[h(\xi)]-\ev_{\bbP_i}[h(\xi)]}\Bigr|\notag\\
        \textcolor{black}{\overset{\mathrm{(b)}}{=}}&~ L_{\xi}\sum_{i=1}^n\widehat{\alpha}_iW_1(\bbQ_{\calW}^{(i)},\bbP_i)\notag\\
        \overset{\textcolor{black}{\mathrm{(c)}}}{\leq}&~L_{\xi}\sum_{i=1}^n\widehat{\alpha}_iW_p(\bbQ_{\calW}^{(i)},\bbP_i)\notag\\
        \leq&~L_{\xi}\sum_{i=1}^n\widehat{\alpha}_i\Big(W_p(\bbQ_{\calW}^{(i)},\widehat{\bbP}_i)+W_p(\widehat{\bbP}_i,\bbP_i)\Big)\notag\\
        \textcolor{black}{\overset{\mathrm{(d)}}{\leq}}&~\textcolor{black}{2L_\xi\sum_{i=1}^n\widehat\alpha_i\varepsilon_i}\notag\\
        \leq&~ 2L_\xi r,
\end{align}
\textcolor{black}{where (a) holds because $\frac{1}{L_\xi}\ell(\widehat{x}^\star,\cdot)$ is an admissible $1$-Lipschitz test function; (b) is Kantorovich--Rubinstein duality~\cite[Theorem~3.2]{mohajerinesfahani2018datadriven}; and (c) uses $W_1\leq W_p$ for $p\geq1$~\cite[Remark~6.6]{villani2009optimal}. The next step is the triangle inequality, and (d) uses $\bbQ_{\calW}^{(i)}\in\calW_i(\varepsilon_i)$ and local coverage on $E$. The last inequality follows from $r=\max_i\varepsilon_i$ and $\widehat\alpha\in\Delta_{n-1}$.}
\textcolor{black}{For the heterogeneity term in~\eqref{eq:OOS_separation}, since} $\widehat\alpha,\alpha^{\star}\in\Delta_{n-1}$, Lemma~\ref{lm:inequality} gives
\begin{align}
        &\Bigl|\sum_{i=1}^n(\widehat{\alpha}_i-\alpha_i^\star)\ev_{\bbP_i}[\ell(\widehat{x}^\star,\xi)]\Bigr|\notag\\
        \leq&~ \frac{1}{2}(\max_i \ev_{\bbP_i}[\ell(\widehat{x}^\star,\xi)]-\min_i \ev_{\bbP_i}[\ell(\widehat{x}^\star,\xi)])\sum_{i=1}^n|\widehat{\alpha}_i-\alpha_i^\star|\notag\\
        \leq&~ (\max_i \ev_{\bbP_i}[\ell(\widehat{x}^\star,\xi)]-\min_i \ev_{\bbP_i}[\ell(\widehat{x}^\star,\xi)]).
\end{align}
Assumption~\ref{ass:bounded_heterogeneity} ensures  $\sup_{i,j} W_p(\bbP_i,\bbP_j)\le \delta$.
Applying Kantorovich--Rubinstein duality again gives
    \begin{align}
\bigl|\ev_{\bbP_i}[\ell(\widehat{x}^\star,\xi)]-\ev_{\bbP_j}[\ell(\widehat{x}^\star,\xi)]\bigr|
\le
L_\xi\,W_p(\bbP_i,\bbP_j)\leq L_{\xi}\delta,
\end{align}
which yields
\begin{align}\label{eq:bound2}
\Bigl|\sum_{i=1}^n(\widehat{\alpha}_i-\alpha_i^\star)\ev_{\bbP_i}[\ell(\widehat{x}^\star,\xi)]\Bigr|\leq L_{\xi}\delta.
\end{align}
Combining the two bounds~\eqref{eq:bound1} and~\eqref{eq:bound2} on the event $E$ completes the proof.
\end{proof}
\fi
The decomposition in~\eqref{eq:OOS_separation} separates data-dependent sampling error from structural bias. More samples reduce the former but cannot mitigate the latter. The impact of mixture-weight deviations grows with the heterogeneity level $\delta$ and vanishes when $\delta=0$. Consequently, data collection is most beneficial when the sampling term dominates. Otherwise, one should focus on reducing heterogeneity or tightening the feasible weight set.
\section{Tractable Reformulation and Algorithm}
\begin{algorithm}[t]
\caption{BiDRO-FL}
\label{alg:dorfl}
\begin{algorithmic}[1]
\REQUIRE Sampling distributions $\{\widehat{\mathbb{P}}_i\}_{i=1}^n$, the constraint set $\calX$, step sizes $\eta_x$, $\eta_\alpha$ and initial points $x_1\in\calX$, $\alpha_1\in\Delta_{n-1}$
\FOR{$t = 1,\ldots,T$}
\STATE Server broadcasts $x_t$ and $\alpha_t$ to each client
    \FOR{client $i = 1,\ldots,n$}
        \STATE Sample $\widehat{\zeta}_{i,t}$ from $\widehat{\mathbb{P}}_i$
        \STATE Find an $\epsilon$-approximate maximizer $z_{i,t}$ of~\eqref{eq:inner_sup_xi}
        \STATE Construct gradient estimate $g^{x}_{i,t}$ by~\eqref{eq:gradient_x}
        \STATE Construct gradient estimate $g^{\alpha}_{i,t}$ by~\eqref{eq:gradient_alpha}
        \STATE Update $x_{i,t+1}$ by~\eqref{eq:local_update}
        \STATE Client $i$ sends $x_{i,t+1}$, $g^{\alpha}_{i,t}$ back to the server
    \ENDFOR
    \STATE Server updates $x_{t+1}$ by~\eqref{eq:global_x_update} and $\alpha_{t+1}$ by~\eqref{eq:global_alpha_update}
\ENDFOR
\end{algorithmic}
\end{algorithm}
The separable formulation~\eqref{eq:problem_separability} still requires optimization over local probability measures. We replace each hard Wasserstein-ball constraint with a penalty and then derive an equivalent empirical representation of the resulting objective:

\begin{small}
    \begin{align}~\label{eq:penalized_DRO}
    \inf_{x\in\calX} \! \sup_{\alpha\in\Delta_{n-1}} \! \sum_{i=1}^n \alpha_i \!
\sup_{\mathbb{Q}_i\in\calM(\Xi)} \!
\mathbb{E}_{\mathbb{Q}_i} \Big[ \ell(x,\xi) \!-\!\rho_i \Big(W_p(\bbQ_i,\widehat{\bbP}_i)\Big)^p \Big]
\end{align}
\end{small}%
Here $\rho_i>0$ is a client-specific robustness penalty. \textcolor{black}{The following reformulation specializes~\cite[Proposition~1]{sinha2017certifying} to the client-wise empirical distributions.}
\begin{proposition}[\textcolor{black}{Finite-dimensional reformulation}]\label{prop:penalty_reformulation}
Suppose $\ell(x,\cdot)$ is continuous on $\Xi$ for every $x\in\calX$, and $c$ is continuous on $\Xi\times\Xi$. \textcolor{black}{Then, problem~\eqref{eq:penalized_DRO} is equivalent to}
    \begin{align}~\label{eq:reformed_DRO}
    \inf_{x\in\calX} \! \sup_{\alpha\in\Delta_{n-1}} \! \sum_{i=1}^n \alpha_i
\ev_{\zeta\sim\widehat{\bbP}_i} \! \Big[\sup_{\xi\in\Xi}\phi_i(x,\zeta,\xi)\Big],
\end{align}
where $\phi_i(x,\zeta,\xi):=\ell(x,\xi)-\rho_i c(\zeta,\xi)$ with the transportation cost function $c(\cdot,\cdot)$ used in Definition~\ref{def:Wp-metric}.
\end{proposition}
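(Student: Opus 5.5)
The plan is to work client by client. By the separability already established (in the same spirit as Lemma~\ref{lm:separability}), the outer $\inf_x\sup_\alpha$ together with the nonnegative weights $\alpha_i$ is common to both formulations. It therefore suffices to show, for each fixed $x\in\calX$ and each $i\in[n]$, that
\begin{align*}
\sup_{\bbQ_i\in\calM(\Xi)}\Bigl\{\ev_{\bbQ_i}[\ell(x,\xi)]-\rho_i W_p^p(\bbQ_i,\widehat{\bbP}_i)\Bigr\}
=\frac{1}{N_i}\sum_{k=1}^{N_i}\sup_{\xi\in\Xi}\phi_i\bigl(x,\widehat{\xi}_i^k,\xi\bigr).
\end{align*}
Multiplying by $\alpha_i$, summing over $i$, and taking $\sup_\alpha$ and then $\inf_x$ then gives~\eqref{eq:reformed_DRO}. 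This is exactly the statement of~\cite[Proposition~1]{sinha2017certifying} with nominal distribution $\widehat{\bbP}_i$. I would prove it directly, since the empirical center makes both directions elementary.

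\textbf{Upper bound ($\le$).} Fix $\bbQ_i$ and any coupling $\gamma\in\Gamma(\bbQ_i,\widehat{\bbP}_i)$. Because the second marginal is the atomic measure $\widehat{\bbP}_i$, $\gamma$ disintegrates as $\frac{1}{N_i}\sum_k \bbQ_i^k\otimes\delta_{\widehat{\xi}_i^k}$ with $\bbQ_i=\frac{1}{N_i}\sum_k\bbQ_i^k$. This gives
\[
\ev_{\bbQ_i}[\ell(x,\xi)]-\rho_i\int c\,\mathrm d\gamma=\frac{1}{N_i}\sum_k\int\bigl(\ell(x,\xi)-\rho_i c(\widehat{\xi}_i^k,\xi)\bigr)\,\mathrm d\bbQ_i^k(\xi)\le\frac{1}{N_i}\sum_k\sup_{\xi}\phi_i(x,\widehat{\xi}_i^k,\xi).
\]
Taking the infimum over $\gamma$ turns the left side into $\ev_{\bbQ_i}[\ell]-\rho_iW_p^p$, since the minus sign converts an inf over couplings into a sup of the expression. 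Because the right side does not depend on $\gamma$, the bound holds for this sup, and then for all $\bbQ_i$.

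\textbf{Lower bound ($\ge$).} Since $\Xi$ is compact and $\phi_i(x,\widehat{\xi}_i^k,\cdot)$ is continuous (by continuity of $\ell(x,\cdot)$ and $c$), each inner supremum is attained at some $\xi^k$. Take $\bbQ_i:=\frac{1}{N_i}\sum_k\delta_{\xi^k}\in\calM(\Xi)$. The coupling that sends $\widehat{\xi}_i^k$ to $\xi^k$ is feasible, so $W_p^p(\bbQ_i,\widehat{\bbP}_i)\le\frac{1}{N_i}\sum_k c(\widehat{\xi}_i^k,\xi^k)$. Hence the penalized objective at $\bbQ_i$ is at least $\frac{1}{N_i}\sum_k\phi_i(x,\widehat{\xi}_i^k,\xi^k)$, which equals the right-hand side.

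\textbf{Main obstacle.} The delicate point is the upper bound: the disintegration of couplings against an empirical marginal, and the measurability and integrability of $\xi\mapsto\phi_i$ so that each $\int\phi_i\,\mathrm d\bbQ_i^k$ is well defined. Compactness of $\Xi$ and continuity make $\phi_i$ bounded and continuous, so I expect this to be routine. If a general (non-atomic) center were needed, I would instead invoke the duality of~\cite{sinha2017certifying} together with a measurable selection argument.
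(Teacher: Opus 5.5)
Your proposal is correct and follows essentially the same route as the paper. The paper likewise merges the supremum over $\bbQ_i$ with the negated infimum over couplings, writes any coupling with empirical marginal as $\frac{1}{N_i}\sum_k \delta_{\widehat\xi_i^k}\otimes\nu_k$, bounds each conditional integral by $\max_{\xi\in\Xi}\phi_i(x,\widehat\xi_i^k,\xi)$, and attains this bound with Dirac conditionals at the maximizers; the only details you leave implicit are the equal splitting of conditional mass when empirical atoms repeat and the symmetry of $c$, both of which are routine.
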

\begingroup

\ifdefined\shortpaper\else
\begin{proof}
Fix $x\in\calX$ and $i\in[n]$. \textcolor{black}{We rewrite the local penalized objective using transport plans.} Denote the local penalized value by
\[
L_i(x):=\sup_{\bbQ_i\in\calM(\Xi)}
\left\{\ev_{\bbQ_i}[\ell(x,\xi)]-\rho_i W_p^p(\bbQ_i,\widehat{\bbP}_i)\right\}.
\]
\emph{Step 1: Transport-plan reformulation.} By symmetry of $W_p$ and Definition~\ref{def:Wp-metric},
\begin{align*}
L_i(x)
&=\sup_{\substack{\bbQ_i\in\calM(\Xi)\\
\gamma\in\Gamma(\widehat{\bbP}_i,\bbQ_i)}}
\int_{\textcolor{black}{\Xi\times\Xi}}\phi_i(x,\zeta,\xi)\,\mathrm d\gamma\\
&=\sup_{\gamma:\,\gamma_\zeta=\widehat{\bbP}_i}
\int_{\textcolor{black}{\Xi\times\Xi}}\phi_i(x,\zeta,\xi)\,\mathrm d\gamma\,,
\end{align*}
\textcolor{black}{where $\gamma_\zeta$ denotes the marginal of $\gamma$ with respect to $\zeta$.} The first equality uses $\rho_i>0$ to turn the infimum of transport costs into a supremum of their negatives. The second follows because the $\xi$-marginal is free when $\bbQ_i$ ranges over $\calM(\Xi)$.
\emph{Step 2: Empirical decomposition.} By~\eqref{eq:empirical_distri}, every feasible coupling admits the representation
\[
\gamma(\mathrm d\zeta,\mathrm d\xi)
=\frac{1}{N_i}\sum_{k=1}^{N_i}
\delta_{\widehat\xi_i^k}(\mathrm d\zeta)\nu_k(\mathrm d\xi),
\qquad \nu_k\in\calM(\Xi),
\]
where $\nu_k$ describes the destination distribution of the mass starting from sample $\widehat\xi_i^k$. Every such collection defines a feasible coupling. For repeated empirical atoms, one may use the same conditional measure for each occurrence to represent any given coupling. Thus
\begingroup\color{black}
\begin{align}\label{eq:empirical_coupling_value}
L_i(x)
&=\sup_{\nu_1,\ldots,\nu_{N_i}\in\calM(\Xi)}
\frac{1}{N_i}\sum_{k=1}^{N_i}\int_{\Xi}\phi_i(x,\widehat\xi_i^k,\xi)\,\mathrm d\nu_k.
\end{align}
\endgroup
The distributions $\nu_k$ can be chosen independently, and each appears only in its corresponding summand.\\
\emph{Step 3: Pointwise upper bound.} For every $\nu_k\in\calM(\Xi)$,
\[
\int_{\textcolor{black}{\Xi}}\phi_i(x,\widehat\xi_i^k,\xi)\,\mathrm d\nu_k
\le\max_{\xi\in\Xi}\phi_i(x,\widehat\xi_i^k,\xi).
\]
Consequently,
\[
L_i(x)\le\frac{1}{N_i}\sum_{k=1}^{N_i}
\max_{\xi\in\Xi}\phi_i(x,\widehat\xi_i^k,\xi).
\]
\noindent\emph{Step 4: Attainment of the bound.} \textcolor{black}{Compactness of $\Xi$ and continuity of $\xi\mapsto\phi_i(x,\widehat\xi_i^k,\xi)$ ensure a maximizer $\xi_k^\star$ for each summand. Choosing $\nu_k=\delta_{\xi_k^\star}$ in~\eqref{eq:empirical_coupling_value} implies that the upper bound derived in Step 3 is attained. Therefore,}
\[
L_i(x)=\ev_{\zeta\sim\widehat{\bbP}_i}
[\sup_{\xi\in\Xi}\phi_i(x,\zeta,\xi)].
\]
This identity holds for every $x$ and $i$. Substituting it into~\eqref{eq:penalized_DRO} proves~\eqref{eq:reformed_DRO}.
\end{proof}
\fi
\endgroup
We solve~\eqref{eq:reformed_DRO} using BiDRO-FL (Algorithm~\ref{alg:dorfl}), where ``Bi'' refers to the two sources of uncertainty.
\textcolor{black}{At each round $t$, each client $i$ draws a sample
$\widehat{\zeta}_{i,t}\sim\widehat{\bbP}_i$ from its local dataset,
independently of the sampling history, and uses it to compute
a gradient step. Specifically, client $i$ solves the local problem:}
\begin{align}\label{eq:inner_sup_xi}
    \arg\sup_{\xi\in\Xi} \; \phi_i(x_t,\widehat{\zeta}_{i,t},\xi)
\end{align}
to obtain an $\epsilon$-approximate maximizer $z_{i,t}$ satisfying $\operatorname{dist}(z_{i,t},\calZ^\star_{i,t})\leq\epsilon$. Here, $\operatorname{dist}$ denotes the distance from a point to the set, and $\calZ^\star_{i,t}$ denotes the set of exact maximizers \textcolor{black}{of~\eqref{eq:inner_sup_xi}}.
Using $z_{i,t}$, each client $i$ \textcolor{black}{computes} stochastic gradients:
\begin{subequations}
    \begin{align}
        g_{i,t}^x&=\nabla_{x}\ell(x_t,z_{i,t})\,,\label{eq:gradient_x}\\
        g_{i,t}^{\alpha}&=\ell(x_t,z_{i,t})-\rho_i c(\widehat{\zeta}_{i,t},z_{i,t}).\label{eq:gradient_alpha}
    \end{align}
\end{subequations}
\textcolor{black}{Subsequently, client $i$ takes the gradient-descent step}
\begin{align}\label{eq:local_update}
    x_{i,t+1}=x_t-\eta_xg_{i,t}^x\,,
\end{align}
with step size $\eta_x$.
\textcolor{black}{Then, client $i$ broadcasts $x_{i,t+1}$ and $g_{i,t}^{\alpha}$ to the server.}
The server updates the global parameters $(x_t,\alpha_t)$:
\begin{subequations}
    \begin{align}
        x_{t+1}&=\operatorname{Proj}_{\calX}\Big(\sum_{i=1}^n\alpha_{i,t}x_{i,t+1}\Big)\,,\label{eq:global_x_update}\\
        \alpha_{t+1}&=\operatorname{Proj}_{\Delta_{n-1}}\left(\alpha_t+\eta_\alpha g_t^{\alpha}\right)\,, \label{eq:global_alpha_update}
    \end{align}
\end{subequations}
where $g_t^\alpha:=(g_{1,t}^{\alpha},\ldots,g_{n,t}^{\alpha})$, $\eta_\alpha$ is the step size, and  $\mathrm{Proj}_{\calY}(\textcolor{black}{\cdot})$ denotes the Euclidean projection onto $\calY$.
Updated parameters $(x_{t+1},\alpha_{t+1})$ are then broadcast to all clients.
\newcommand{\simulationfigure}{%
\begin{figure*}[t]
  \centering
  \begin{subfigure}{0.49\textwidth}
    \centering
    \includegraphics[width=\linewidth]{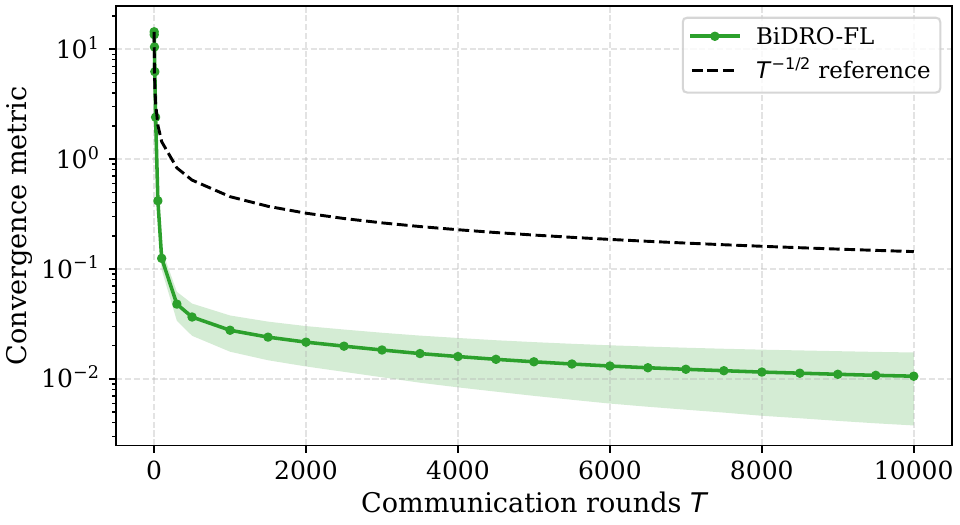}
    \caption{} %
    \label{fig:convergence}
  \end{subfigure}\hfill
  \begin{subfigure}{0.49\textwidth}
    \centering
    \includegraphics[width=\linewidth]{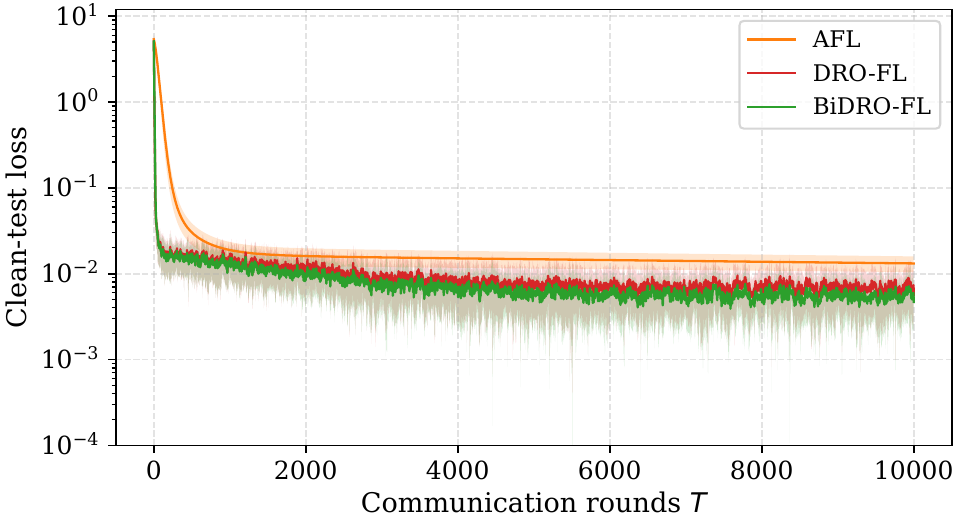}
    \caption{} %
    \label{fig:OOS}
  \end{subfigure}
  \caption{(a) Empirical convergence metric. (b) Clean-test loss comparison. Curves show means over $20$ runs; shading denotes $\pm$ one standard deviation across runs.}
  \label{fig:simulation}
\end{figure*}%
}
\section{Convergence Analysis}
\textcolor{black}{For Algorithm~\ref{alg:dorfl}, define}
\begin{align}
F(x,\alpha):=\sum_{i=1}^n\alpha_i\ev_{\zeta\sim\widehat{\bbP}_i}[f_i(x,\zeta)]\,,
\end{align}
where $f_i(x,\zeta)\!:=\sup_{\xi\in\Xi}~ \phi_i(x,\zeta,\xi)$. \textcolor{black}{Note that the reformulated problem~\eqref{eq:reformed_DRO} for which we designed the algorithm is $\inf_{x \in \mathcal{X}}\sup_{\alpha\in\Delta_{n-1}}F(x,\alpha)$.}
We begin by stating the following assumptions.
\begin{assumption}\label{ass:main}
The regularity and geometry conditions are provided as follows.
\begin{enumerate}[label=(\roman*)]
    \item The map $(x,\xi)\mapsto\ell(x,\xi)$ is differentiable on $\calX\times\Xi$. For any $\xi\in\Xi$, the map $x\mapsto\ell(x,\xi)$ is convex on $\calX$ \textcolor{black}{and} it satisfies the bounds  $|\ell(x,\xi)|\leq B_1$ and $\|\nabla_x\ell(x,\xi)\|\le B_2$ for all $x \in \calX$. 
    Finally, for any $x\in\calX$, the map $\xi\mapsto \nabla_x\ell(x,\xi)$ is $L_{x\xi}$-Lipschitz on $\Xi$.~\label{ass:1}
    \item The map $(\zeta,\xi)\mapsto c(\zeta,\xi)$ is continuous on $\Xi\times\Xi$.
    \item For any $i\in[n]$ and $(x,\zeta)\in\calX\times\Xi$, the map $\xi\mapsto \ell(x,\xi)-\rho_i c(\zeta,\xi)$ is $L_\xi^c$-Lipschitz continuous.
\end{enumerate}
\end{assumption}
We now establish the convergence result of Algorithm~\ref{alg:dorfl}.
\begin{theorem}[\textcolor{black}{Convergence of BiDRO-FL}]\label{thm:convergence}
    Let Assumption~\ref{ass:main} hold. With $\overline{x}_T:=\frac{1}{T}\sum_{t=1}^Tx_t$, selecting $\eta_x=\eta_\alpha=1/\sqrt{T}$ yields
    \begin{small}
        \begin{align}
        \ev\Big[\sup\limits_{\alpha\in\Delta_{n-1}}F(\overline{x}_T,\alpha)-\inf\limits_{x\in\calX}\sup\limits_{\alpha\in\Delta_{n-1}}F(x,\alpha)\Big]=\calO (T^{-1/2}+\epsilon).
    \end{align}
    \end{small}%
\end{theorem}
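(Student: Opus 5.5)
Since $F(x,\alpha)$ is linear in $\alpha$ and each $f_i(\cdot,\zeta)$ is convex in $x$ (a supremum of convex functions under Assumption~\ref{ass:main}\ref{ass:1}), $F$ is convex--concave. My plan is the standard stochastic gradient descent--ascent regret analysis on the duality gap. For any fixed $x\in\calX$ and $\alpha\in\Delta_{n-1}$, convexity and linearity give
\[
F(x_t,\alpha)-F(x,\alpha_t)\le \langle \nabla_x F(x_t,\alpha_t),x_t-x\rangle-\langle \nabla_\alpha F(x_t,\alpha_t),\alpha-\alpha_t\rangle ,
\]
where the first gradient is a subgradient. I would average over $t$ and use Jensen on $\overline{x}_T$. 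Since $F(x,\overline\alpha_T)\le\sup_\alpha F(x,\alpha)$, choosing $x$ as a minimizer of $\sup_\alpha F$ and $\alpha$ as the maximizer at $\overline{x}_T$ bounds the target quantity by the averaged linearized regret. Because $\alpha$ is chosen depending on the trajectory, I would bound $\sup_\alpha$ of the regret, not the regret at a fixed $\alpha$.

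\textbf{Step 1: gradient characterization.} By a Danskin-type argument, $\nabla_x\ell(x,\xi^\star)$ for any $\xi^\star\in\calZ^\star$ is a subgradient of $f_i(\cdot,\zeta)$ at $x$. The inexact maximizer $z_{i,t}$ then gives the following.
\begin{itemize}
\item \emph{$x$-bias.} $\|g^x_{i,t}-\nabla_x\ell(x_t,\xi^\star)\|\le L_{x\xi}\epsilon$, by Lipschitzness of $\xi\mapsto\nabla_x\ell$.
\item \emph{$\alpha$-bias.} $|g^\alpha_{i,t}-f_i(x_t,\widehat\zeta_{i,t})|\le L^c_\xi\epsilon$, by Assumption~\ref{ass:main}(iii).
\end{itemize}
Conditioned on the history, $\widehat\zeta_{i,t}\sim\widehat\bbP_i$, so $\sum_i\alpha_{i,t}\nabla_x\ell(x_t,\xi^\star_{i,t})$ is an unbiased subgradient of $F(\cdot,\alpha_t)$, and $g^\alpha_t$ is unbiased for $\nabla_\alpha F$ up to an $\calO(\epsilon)$ bias. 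Both estimates are bounded: $\|g^x\|\le B_2$, and $\|g^\alpha\|\le\sqrt n(B_1+\rho_i\max c)$, which is finite by compactness.

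\textbf{Step 2: the $x$-update.} The server step is $x_{t+1}=\mathrm{Proj}_\calX(x_t-\eta_x\sum_i\alpha_{i,t}g^x_{i,t})$, because the $\alpha_t$-weighted average of the local steps equals one step along the aggregated gradient $G_t:=\sum_i\alpha_{i,t}g^x_{i,t}$. Nonexpansiveness of the projection gives
\[
\|x_{t+1}-x\|^2\le\|x_t-x\|^2-2\eta_x\langle G_t,x_t-x\rangle+\eta_x^2B_2^2 .
\]
The analogous inequality holds for projected ascent in $\alpha$ on the simplex, whose diameter is at most $\sqrt2$. Telescoping gives regret $\le D_x^2/(2\eta_x T)+\eta_xB_2^2/2$ plus the corresponding $\alpha$ terms, which is $\calO(T^{-1/2})$ when $\eta=1/\sqrt T$. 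The bias contributes $\calO(\epsilon)$ through $D_xL_{x\xi}\epsilon+\sqrt2\sqrt nL^c_\xi\epsilon$.

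\textbf{Step 3: the main obstacle.} The hard part is the noise terms $\langle \nabla F-g,\,x_t-x\rangle$ when the comparator $x$ or $\alpha$ depends on the whole trajectory, since these are not martingale differences. I would first handle this with the ghost-iterate trick of Nemirovski et al. Run auxiliary sequences
\[
\tilde x_{t+1}=\mathrm{Proj}(\tilde x_t+\eta_x(G_t-\nabla_xF)),\qquad \tilde\alpha_{t+1}=\mathrm{Proj}(\tilde\alpha_t-\eta_\alpha(g^\alpha_t-\nabla_\alpha F)),
\]
and split
\[
\langle\text{noise},x_t-x\rangle=\langle\text{noise},x_t-\tilde x_t\rangle+\langle\text{noise},\tilde x_t-x\rangle .
\]
The first term has zero mean, since $x_t-\tilde x_t$ is adapted to the history. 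The second is bounded uniformly over the comparator by the same projection telescoping, again giving $\calO(T^{-1/2})$ with bounded noise variances. Combining Steps 2 and 3 and taking expectations yields the stated $\calO(T^{-1/2}+\epsilon)$ rate.
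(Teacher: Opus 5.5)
Your argument is correct and gives the same $\calO(T^{-1/2}+\epsilon)$ rate, but it differs from the paper's proof in two places.

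\emph{The reduction.} The paper invokes Sion's minimax theorem to write $\inf_x\sup_\alpha F=\sup_\alpha\inf_xF\ge\inf_xF(x,\overline\alpha)$. This leaves a primal comparator that depends on the trajectory through $\overline\alpha$. Your bound $\sup_\alpha F(x^\dagger,\alpha)\ge F(x^\dagger,\overline\alpha_T)$, with $x^\dagger\in\arg\min_x\sup_\alpha F(x,\alpha)$, needs no minimax theorem. Moreover, $x^\dagger$ depends only on the datasets, so the unbiased part of the primal noise paired with $x_t-x^\dagger$ already has zero mean. Only the dual comparator $\alpha^\dagger$, which depends on $\overline x_T$, genuinely needs a uniform bound.

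\emph{The uniform bound.} The paper splits $\langle x_t-x,M_t\rangle=\langle x_t,M_t\rangle-\langle x,M_t\rangle$ and removes the first part by the tower property. It then controls $\sup_x\langle -x,\sum_tM_t\rangle$, and on the simplex $\sup_\alpha\sum_i\alpha_i\sum_tM_{i,t}\le\sum_i|\sum_tM_{i,t}|$, through $\ev\|\sum_tM_t\|\le(\sum_t\ev\|M_t\|^2)^{1/2}$, i.e., orthogonality of martingale increments. Your ghost sequence instead turns the term into the regret of an auxiliary projected-gradient run plus a zero-mean term. This costs one extra sequence, but it only involves differences of feasible points. By contrast, the paper's step $\sup_x\langle -x,v\rangle\le D_x\|v\|$ tacitly needs $\max_{x\in\calX}\|x\|\le D_x$, e.g.\ after recentering. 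Your version also carries over unchanged to non-Euclidean mirror-descent geometries, whereas the paper's route is shorter and uses the Hilbert-space structure directly.

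Two small repairs are needed. First, in your first display the dual term should be $+\langle\nabla_\alpha F(x_t,\alpha_t),\alpha-\alpha_t\rangle$, by linearity in $\alpha$; your ascent step and ghost sequence already use this sign. Second, $\nabla_xF(x_t,\alpha_t)$ must be the particular subgradient $\ev[\sum_i\alpha_{i,t}\nabla_x\ell(x_t,\xi^\star_{i,t})\mid\mathcal F_t]$, as with the paper's $sg_t^x$, rather than an arbitrary element of $\partial_xF(x_t,\alpha_t)$. Otherwise the noise is not a martingale difference.
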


\ifdefined\shortpaper\fi
\begingroup
\ifdefined\shortpaper\else
\begin{proof}
By the convexity-concavity of $F$ and the minimax theorem, the duality gap is bounded by
        \begin{align}
        &~\sup\limits_{\alpha\in\Delta_{n-1}}F(\overline{x}_T,\alpha)-\inf\limits_{x\in\calX}\sup\limits_{\alpha\in\Delta_{n-1}}F(x,\alpha)\nonumber\\
        \textcolor{black}{\overset{\mathrm{(a)}}{=}}&~\sup\limits_{\alpha\in\Delta_{n-1}}F(\overline{x}_T,\alpha)-\sup\limits_{\alpha\in\Delta_{n-1}}\inf\limits_{x\in\calX}F(x,\alpha)\nonumber\\
        \leq &~\sup\limits_{\alpha\in\Delta_{n-1}}F(\overline{x}_T,\alpha)-\inf\limits_{x\in\calX}F(x,\overline{\alpha})\nonumber\\
        \textcolor{black}{\overset{\mathrm{(b)}}{=}}&~\sup\limits_{\substack{x\in\calX\\ \alpha\in\Delta_{n-1}}}\Big\{F(\overline{x}_T,\alpha)-\frac{1}{T}\sum_{t=1}^TF(x,\alpha_t)\Big\}\nonumber\\
        \textcolor{black}{\overset{\mathrm{(c)}}{\leq}}&~\sup\limits_{\substack{x\in\calX\\ \alpha\in\Delta_{n-1}}}\Big\{\frac{1}{T}\sum_{t=1}^TF(x_t,\alpha)-\frac{1}{T}\sum_{t=1}^TF(x,\alpha_t)\Big\}\nonumber\\
        =&~\frac{1}{T}\sup\limits_{\substack{x\in\calX\\ \alpha\in\Delta_{n-1}}}\Big\{\sum_{t=1}^T\left(F(x_t,\alpha)-F(x,\alpha_t)\right)\Big\}\,,~\label{eq:separable_performance_gap}
    \end{align}
    where $\overline{\alpha}=\frac{1}{T}\sum_{t=1}^T\alpha_t$. \textcolor{black}{Since $\calX$ and $\Delta_{n-1}$ are compact and convex, and $F$ is continuous and convex--concave, Sion\textquotesingle{}s minimax theorem~\cite[Theorem~3.4]{sion1958general} justifies interchanging the infimum and supremum in (a). Equality (b) follows from the linearity of $F(x,\cdot)$, while inequality (c) follows from the convexity of $F(\cdot,\alpha)$ and Jensen\textquotesingle{}s inequality.}
   For a given $\alpha$, since the map $x\mapsto F(x,\alpha)$ is not necessarily differentiable, let $\partial_xF(x,\alpha)$ be the subdifferential of $F(\cdot,\alpha)$ at $x$.
    Moreover, the convexity-concavity of $F$ results in
        \begin{align}
            &~F(x_t,\alpha)-F(x,\alpha_t)\nonumber\\
            =&~F(x_t,\alpha)-F(x_t,\alpha_t)+F(x_t,\alpha_t)-F(x,\alpha_t)\nonumber\\
            \leq&~\langle \alpha-\alpha_t,g_t^{\alpha}\rangle+\langle \alpha-\alpha_t,\nabla_\alpha F(x_t,\alpha_t)-g_t^{\alpha}\rangle\nonumber\\
            &~+\langle x_t-x,g_t^x\rangle+\langle x_t-x,sg_t^x-g_t^x\rangle\,,~\label{eq:F_gap}
        \end{align}
        where $g_t^x=\sum_{i=1}^n\alpha_{i,t}g_{i,t}^x$ and $sg_t^x\in\partial_x F(x_t,\alpha_t)$. Substituting~\eqref{eq:F_gap} into~\eqref{eq:separable_performance_gap}, we obtain
        \begin{small}
            \begin{align}
            &~\sup\limits_{\alpha\in\Delta_{n-1}}F(\overline{x}_T,\alpha)-\inf\limits_{x\in\calX}\sup\limits_{\alpha\in\Delta_{n-1}}F(x,\alpha)\notag\\
        \leq&~\frac{1}{T}\Big(\sup_{\alpha\in\Delta_{n-1}}\sum_{t=1}^T\langle \alpha-\alpha_t,g_t^{\alpha}\rangle+\sup_{x\in\calX}\sum_{t=1}^T\langle x_t-x,g_t^x\rangle\notag\\
        &~+\sup_{\alpha\in\Delta_{n-1}}\sum_{t=1}^T\langle \alpha-\alpha_t,\nabla_\alpha F(x_t,\alpha_t)-g_t^{\alpha}\rangle\notag\\~
        &~+\sup\limits_{x\in\calX}\sum_{t=1}^T \langle x_t-x,sg_t^x-g_t^x\rangle\Big).\label{eq:performance_gap_4terms}
        \end{align}
        \end{small}%
\fi

\textcolor{black}{The four terms in~\eqref{eq:performance_gap_4terms} represent the primal and dual updates and their estimation errors.}
\ifdefined\shortpaper\else
\textcolor{black}{We bound them using the following four lemmas, whose proofs are deferred to Appendix~\ref{app:convergence_proofs}.}
\fi
\textcolor{black}{Throughout these lemmas, Assumption~\ref{ass:main} holds and the iterates are generated by Algorithm~\ref{alg:dorfl}; $sg_t^x$ is the exact-oracle subgradient constructed in the proof of Lemma~\ref{lem:primal_estimation_error}.}

\noindent\textcolor{black}{\textit{Primal-side bounds.}}
\textcolor{black}{We first bound the primal update term and its estimation error.}

\begin{samepage}
\begin{lemma}[Primal update bound]\label{lem:primal_update_bound}
\textcolor{black}{Almost surely, for} every $x\in\calX$, the primal updates satisfy
\begin{align}\label{eq:bound_1}
\sum_{t=1}^T\langle x_t-x,g_t^x\rangle
\leq \frac{D_x^2}{2\eta_x}+\frac{\eta_xTB_2^2}{2}.
\end{align}
\end{lemma}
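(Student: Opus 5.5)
The plan is to rewrite the server's primal step as a single projected (sub)gradient step and then run the standard online projected-gradient telescoping argument. The first step uses $\alpha_t\in\Delta_{n-1}$, so that $\sum_i\alpha_{i,t}=1$. Hence
\[
\sum_{i=1}^n\alpha_{i,t}x_{i,t+1}=\sum_{i=1}^n\alpha_{i,t}(x_t-\eta_x g^x_{i,t})=x_t-\eta_x g_t^x ,
\]
with $g_t^x=\sum_i\alpha_{i,t}g^x_{i,t}$ as defined in the proof of Theorem~\ref{thm:convergence}. The update \eqref{eq:global_x_update} is therefore exactly $x_{t+1}=\operatorname{Proj}_{\calX}(x_t-\eta_x g_t^x)$. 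This identity holds pathwise, for every realization of the sampled $\widehat\zeta_{i,t}$ and approximate maximizers $z_{i,t}$, which is why the lemma can be stated almost surely.

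Next, fix any $x\in\calX$. Since $x=\operatorname{Proj}_{\calX}(x)$, the nonexpansiveness of the Euclidean projection onto the convex set $\calX$ gives
\[
\|x_{t+1}-x\|^2\le\|x_t-\eta_x g_t^x-x\|^2=\|x_t-x\|^2-2\eta_x\langle x_t-x,g_t^x\rangle+\eta_x^2\|g_t^x\|^2 .
\]
Rearranging yields
\[
\langle x_t-x,g_t^x\rangle\le\frac{\|x_t-x\|^2-\|x_{t+1}-x\|^2}{2\eta_x}+\frac{\eta_x}{2}\|g_t^x\|^2 .
\]
To bound the last term, note that $z_{i,t}\in\Xi$ and $x_t\in\calX$. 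Assumption~\ref{ass:main}(i) then gives $\|g^x_{i,t}\|=\|\nabla_x\ell(x_t,z_{i,t})\|\le B_2$. By convexity of the norm and $\alpha_t\in\Delta_{n-1}$, it follows that $\|g_t^x\|\le B_2$.

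Finally, sum over $t=1,\dots,T$. The distance terms telescope to $(\|x_1-x\|^2-\|x_{T+1}-x\|^2)/(2\eta_x)\le D_x^2/(2\eta_x)$, since $x_1,x\in\calX$. The gradient terms contribute at most $\eta_x TB_2^2/2$, which gives \eqref{eq:bound_1}.

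The only point requiring care is the first step: we must check that the weighted averaging of the local iterates followed by projection coincides with one projected step of the aggregated gradient. This relies on the weights $\alpha_{i,t}$ summing to one, which holds because every iterate $\alpha_t$ is produced by projection onto $\Delta_{n-1}$ in \eqref{eq:global_alpha_update}, and $\alpha_1\in\Delta_{n-1}$ by initialization. The rest is routine.
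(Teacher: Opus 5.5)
Your proposal is correct and follows essentially the same argument as the paper: write the server step as the single projected step $x_{t+1}=\operatorname{Proj}_{\calX}(x_t-\eta_x g_t^x)$, apply nonexpansiveness of the projection, telescope, and bound $\|g_t^x\|\le\sum_i\alpha_{i,t}\|g_{i,t}^x\|\le B_2$. You also spell out two points the paper uses without comment, namely $\sum_i\alpha_{i,t}=1$ and $z_{i,t}\in\Xi$.
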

\end{samepage}

\textcolor{black}{The next lemma accounts for stochastic sampling and inexact inner maximization.}

\begin{samepage}
\begin{lemma}[Primal estimation error]\label{lem:primal_estimation_error}
The primal gradient-estimation error satisfies
\begin{align}\label{eq:bound_2}
&\ev\Big[\sup_{x\in\calX}\sum_{t=1}^T
\langle x_t-x,sg_t^x-g_t^x\rangle\Big]\notag\\
&\qquad\leq 2D_xB_2\sqrt{T}+TD_xL_{x\xi}\epsilon.
\end{align}
\end{lemma}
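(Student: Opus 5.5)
We will construct $sg_t^x$ explicitly and then split the error $sg_t^x-g_t^x$ into a zero-mean sampling part and a bias part from the inexact inner maximization.

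\emph{Construction of the exact-oracle subgradient.} For each client $i$ and round $t$, let $z_{i,t}^\star\in\calZ^\star_{i,t}$ be the exact maximizer closest to $z_{i,t}$, so that $\|z_{i,t}-z^\star_{i,t}\|\le\epsilon$. By Danskin's theorem, applied to $f_i(x,\zeta)=\sup_{\xi\in\Xi}\phi_i(x,\zeta,\xi)$ over the compact $\Xi$ with $x\mapsto\ell(x,\xi)$ convex and differentiable, $\nabla_x\ell(x_t,z^\star_{i,t})\in\partial_x f_i(x_t,\widehat\zeta_{i,t})$. To get an element of $\partial_xF(x_t,\alpha_t)$ we need such a subgradient for every atom $\zeta$ of $\widehat{\bbP}_i$, not only the sampled one. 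So for each atom $\widehat\xi_i^k$ we fix a measurable selection $z^\star_i(x_t,\widehat\xi_i^k)$ of maximizers, chosen to agree with $z^\star_{i,t}$ when $\widehat\zeta_{i,t}=\widehat\xi_i^k$. We then set
\[
sg_t^x:=\sum_{i=1}^n\alpha_{i,t}\ev_{\zeta\sim\widehat{\bbP}_i}\bigl[\nabla_x\ell(x_t,z_i^\star(x_t,\zeta))\bigr].
\]
Since subdifferentials of convex functions add under nonnegative combinations, $sg_t^x\in\partial_xF(x_t,\alpha_t)$.

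\emph{Decomposition.} Define $\tilde g_t^x:=\sum_i\alpha_{i,t}\nabla_x\ell(x_t,z^\star_{i,t})$. We write
\[
sg_t^x-g_t^x=(sg_t^x-\tilde g_t^x)+(\tilde g_t^x-g_t^x).
\]

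\emph{Bias term.} By the $L_{x\xi}$-Lipschitz property of $\xi\mapsto\nabla_x\ell(x,\xi)$ and $\sum_i\alpha_{i,t}=1$,
\[
\|\tilde g_t^x-g_t^x\|\le L_{x\xi}\epsilon .
\]
Cauchy--Schwarz with $\|x_t-x\|\le D_x$ then gives a contribution of at most $TD_xL_{x\xi}\epsilon$, uniformly in $x$. This handles the supremum over $x$ trivially.

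\emph{Martingale term.} Let $u_t:=sg_t^x-\tilde g_t^x$ and let $\mathcal F_{t-1}$ denote the history up to the broadcast of $(x_t,\alpha_t)$. Because the $\widehat\zeta_{i,t}$ are drawn afresh, independently of the past, we have $\ev[u_t\mid\mathcal F_{t-1}]=0$, and $\|u_t\|\le 2B_2$. The obstacle is the supremum over $x$ inside the expectation, since $x$ may depend on the whole trajectory. We split
\[
\sum_t\langle x_t-x,u_t\rangle=\sum_t\langle x_t-x_1,u_t\rangle+\Bigl\langle x_1-x,\sum_t u_t\Bigr\rangle.
\]
Here $x_1$ is deterministic and $x_t$ is $\mathcal F_{t-1}$-measurable, so each summand $\langle x_t-x_1,u_t\rangle$ has zero conditional mean and the first sum has zero expectation. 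For the second term, Cauchy--Schwarz and the supremum over $x$ give at most $D_x\ev\|\sum_tu_t\|$. Orthogonality of martingale increments then yields
\[
\ev\Bigl\|\sum_t u_t\Bigr\|\le\Bigl(\sum_t\ev\|u_t\|^2\Bigr)^{1/2}\le 2B_2\sqrt T .
\]
This gives the term $2D_xB_2\sqrt T$.

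As an alternative for the martingale term, one could use a ghost-iterate argument: run $\tilde x_{t+1}=\mathrm{Proj}_\calX(\tilde x_t-\eta u_t)$, which gives a bound of the same order. The direct Cauchy--Schwarz bound above is cleaner and matches the stated constant.

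Summing the two contributions gives~\eqref{eq:bound_2}. The delicate points are the measurability of the maximizer selection and checking that $sg_t^x$ is a genuine subgradient of $F(\cdot,\alpha_t)$ via Danskin's theorem. Both follow from the compactness of $\Xi$ and the continuity of $\phi_i$.
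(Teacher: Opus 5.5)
Your proof follows the same route as the paper's. You split $sg_t^x-g_t^x$ into a conditionally mean-zero sampling part and an oracle-bias part. You bound the bias by $L_{x\xi}\epsilon$ per round via the Lipschitz continuity of $\xi\mapsto\nabla_x\ell(x,\xi)$, and the sampling part by $D_x\ev\|\sum_t u_t\|\le 2D_xB_2\sqrt T$ via Jensen and orthogonality of martingale increments. Your centering $x_t-x=(x_t-x_1)+(x_1-x)$ is slightly more careful than the paper's split into $\langle x_t,\cdot\rangle$ and $\langle -x,\cdot\rangle$. The paper then bounds $\sup_{x\in\calX}\langle -x,S\rangle$ by $D_x\|S\|$, which as written requires $\calX$ to lie in the ball of radius $D_x$ about the origin. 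By contrast, $\|x_1-x\|\le D_x$ always holds, and since $x_1$ is deterministic the extra term stays mean-zero.

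The one step that needs repair is your construction of $sg_t^x$. A fixed selection $z_i^\star(x_t,\zeta)$ that agrees with the realized $z^\star_{i,t}$ whenever $\widehat\zeta_{i,t}=\zeta$ exists only if $z^\star_{i,t}$ is a deterministic function of $(x_t,\widehat\zeta_{i,t})$. That is, neither the inner solver nor the nearest-point tie-break may use extra randomness. The lemma covers an arbitrary $\epsilon$-approximate maximizer, and the solver in the paper's own simulation adds a random perturbation. In that case no single selection matches every realization. If you let the selection depend on the realized run, $sg_t^x$ is no longer $\mathcal F_{t-1}$-measurable, and $\ev[u_t\mid\mathcal F_{t-1}]=0$ no longer follows from fresh sampling.

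The paper avoids this by defining $sg_t^x$ directly as the conditional expectation of $\tilde g_t^x$ given the history before round-$t$ sampling, which is mean-zero by construction. It is a genuine subgradient: take conditional expectations in the pathwise inequality $f_i(x,\widehat\zeta_{i,t})\ge f_i(x_t,\widehat\zeta_{i,t})+\langle\nabla_x\ell(x_t,z^\star_{i,t}),x-x_t\rangle$, which uses only convexity of $\ell(\cdot,\xi)$, and sum with weights $\alpha_{i,t}$. This gives $F(x,\alpha_t)\ge F(x_t,\alpha_t)+\langle sg_t^x,x-x_t\rangle$. With this definition the rest of your argument goes through unchanged, including $\|u_t\|\le 2B_2$ and the orthogonality of the $u_t$.
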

\end{samepage}

\noindent\textcolor{black}{\textit{Dual-side bounds.}}
\textcolor{black}{We now bound the corresponding terms for the mixture weights.} Set $D_\alpha:=\max_{\alpha,\alpha'\in\Delta_{n-1}}\|\alpha-\alpha'\|$. By continuity of $c$ and compactness of $\Xi$, choose $D_c>0$ such that $|c(\zeta,\xi)|\leq D_c$ for all $\zeta,\xi\in\Xi$, and define $C_i:=B_1+\rho_iD_c$.

\begin{samepage}
\begin{lemma}[Dual update bound]\label{lem:dual_update_bound}
\textcolor{black}{Almost surely, for} every $\alpha\in\Delta_{n-1}$, the dual updates satisfy
\begin{align}\label{eq:bound_3}
\sum_{t=1}^T\langle\alpha-\alpha_t,g_t^\alpha\rangle
\leq\frac{D_\alpha^2}{2\eta_\alpha}
+\frac{\eta_\alpha T}{2}\sum_{i=1}^n C_i^2.
\end{align}
\end{lemma}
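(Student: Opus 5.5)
The plan is to run the standard online projected (sub)gradient \emph{ascent} analysis for the dual iterates, using the projection step~\eqref{eq:global_alpha_update} and nonexpansiveness of $\operatorname{Proj}_{\Delta_{n-1}}$. Fix any $\alpha\in\Delta_{n-1}$. Since $\alpha\in\Delta_{n-1}$ and projection onto a closed convex set is nonexpansive,
\begin{align*}
\|\alpha_{t+1}-\alpha\|^2
&\le\|\alpha_t+\eta_\alpha g_t^\alpha-\alpha\|^2\\
&=\|\alpha_t-\alpha\|^2-2\eta_\alpha\langle\alpha-\alpha_t,g_t^\alpha\rangle+\eta_\alpha^2\|g_t^\alpha\|^2 .
\end{align*}
Rearranging isolates the per-round term:
\[
\langle\alpha-\alpha_t,g_t^\alpha\rangle\le\frac{\|\alpha_t-\alpha\|^2-\|\alpha_{t+1}-\alpha\|^2}{2\eta_\alpha}+\frac{\eta_\alpha}{2}\|g_t^\alpha\|^2 .
\]
This holds pathwise, so the final bound will hold almost surely, with no expectation needed.

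Next, sum over $t=1,\dots,T$. The first terms telescope to at most $\|\alpha_1-\alpha\|^2/(2\eta_\alpha)\le D_\alpha^2/(2\eta_\alpha)$, after dropping the nonpositive term $-\|\alpha_{T+1}-\alpha\|^2$. It then remains to bound $\|g_t^\alpha\|^2=\sum_{i=1}^n(g_{i,t}^\alpha)^2$ uniformly.

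For each component, by~\eqref{eq:gradient_alpha}, $|g_{i,t}^\alpha|\le|\ell(x_t,z_{i,t})|+\rho_i|c(\widehat\zeta_{i,t},z_{i,t})|$. Assumption~\ref{ass:main}\ref{ass:1} gives $|\ell|\le B_1$, and the choice of $D_c$ gives $|c|\le D_c$. Both apply because $x_t\in\calX$ (it is a projection onto $\calX$, and $x_1\in\calX$) and $z_{i,t},\widehat\zeta_{i,t}\in\Xi$. Hence $|g_{i,t}^\alpha|\le C_i$, so $\|g_t^\alpha\|^2\le\sum_i C_i^2$. Summing gives the $\frac{\eta_\alpha T}{2}\sum_iC_i^2$ term, which proves~\eqref{eq:bound_3}.

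The only point needing care is confirming that the inexact maximizer $z_{i,t}$ lies in $\Xi$, since the bounds $B_1$ and $D_c$ are only stated on $\Xi$. I would note that the local problem~\eqref{eq:inner_sup_xi} is posed over $\Xi$, so $z_{i,t}$ is taken to be a feasible point of $\Xi$ (an approximate maximizer within $\Xi$). Apart from this, the argument is routine.
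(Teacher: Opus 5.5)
Your proposal is correct and follows essentially the same route as the paper's proof. Both use nonexpansiveness of the projection onto $\Delta_{n-1}$, telescope the squared distances to get $D_\alpha^2/(2\eta_\alpha)$, and bound each component by $|g_{i,t}^\alpha|\le B_1+\rho_i D_c=C_i$. Your remarks that $x_t\in\calX$ and $z_{i,t}\in\Xi$ are needed for this component bound make explicit what the paper leaves implicit in ``by the definition of $C_i$.''
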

\end{samepage}

\textcolor{black}{The final lemma bounds the dual estimation error.}

\begin{samepage}
\begin{lemma}[Dual estimation error]\label{lem:dual_estimation_error}
The dual gradient-estimation error satisfies
\begin{align}\label{eq:bound_4}
&\ev\Big[\sup_{\alpha\in\Delta_{n-1}}\sum_{t=1}^T
\langle\alpha-\alpha_t,\nabla_\alpha F(x_t,\alpha_t)-g_t^\alpha\rangle\Big]\notag\\
&\qquad\leq 2\sqrt{T}\sum_{i=1}^n C_i+nTL_\xi^c\epsilon.
\end{align}
\end{lemma}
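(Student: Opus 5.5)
We split the dual estimation error into a sampling part and an inexact-maximization part, and bound each separately. For each client $i$ and round $t$, let $z^\star_{i,t}\in\calZ^\star_{i,t}$ be the exact maximizer closest to $z_{i,t}$, so that $\|z_{i,t}-z^\star_{i,t}\|\le\epsilon$. Define the exact-oracle dual estimate
\[
\tilde g^\alpha_{i,t}:=\phi_i(x_t,\widehat\zeta_{i,t},z^\star_{i,t})=f_i(x_t,\widehat\zeta_{i,t}),
\]
and let $\tilde g_t^\alpha$ collect these entries. Since $F$ is linear in $\alpha$, we have $\nabla_\alpha F(x_t,\alpha_t)_i=\ev_{\zeta\sim\widehat\bbP_i}[f_i(x_t,\zeta)]$. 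We then write
\[
\nabla_\alpha F(x_t,\alpha_t)-g_t^\alpha=\underbrace{\bigl(\nabla_\alpha F(x_t,\alpha_t)-\tilde g^\alpha_t\bigr)}_{=:u_t}+\underbrace{\bigl(\tilde g_t^\alpha-g_t^\alpha\bigr)}_{=:v_t}.
\]
The supremum over $\alpha$ is split accordingly.

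\textbf{The inexactness term $v_t$.} By Assumption~\ref{ass:main}(iii), the map $\xi\mapsto\phi_i(x_t,\widehat\zeta_{i,t},\xi)$ is $L_\xi^c$-Lipschitz, so $|v_{i,t}|\le L_\xi^c\epsilon$. Since $\alpha,\alpha_t\in\Delta_{n-1}$, each entry satisfies $|\alpha_i-\alpha_{i,t}|\le1$. Hence $\langle\alpha-\alpha_t,v_t\rangle\le\sum_i|v_{i,t}|\le nL_\xi^c\epsilon$. Summing over $t$ gives the $nTL_\xi^c\epsilon$ contribution almost surely, uniformly in $\alpha$.

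\textbf{The sampling term $u_t$.} Let $\mathcal F_{t}$ be the history up to the broadcast of $(x_t,\alpha_t)$. Because $\widehat\zeta_{i,t}$ is drawn fresh from $\widehat\bbP_i$, we have $\ev[u_t\mid\mathcal F_t]=0$, and $|u_{i,t}|\le 2C_i$ since $|f_i|\le B_1+\rho_iD_c=C_i$. The difficulty is the supremum over $\alpha$ inside the expectation, since $\alpha$ may depend on the whole trajectory. We split
\[
\sum_t\langle\alpha-\alpha_t,u_t\rangle=\Bigl\langle\alpha,\sum_t u_t\Bigr\rangle-\sum_t\langle\alpha_t,u_t\rangle.
\]
The second sum is a martingale with zero mean, because $\alpha_t$ is $\mathcal F_t$-measurable. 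For the first, we use $\sup_{\alpha\in\Delta_{n-1}}\langle\alpha,S\rangle\le\sum_i|S_i|$ with $S:=\sum_tu_t$. Orthogonality of martingale increments and Jensen's inequality then give
\[
\ev|S_i|\le\sqrt{\textstyle\sum_t\ev u_{i,t}^2}\le 2C_i\sqrt T.
\]
Summing over $i$ yields $2\sqrt T\sum_iC_i$. If the constant turns out tighter (a variance bound of $C_i$ instead of $2C_i$), the stated bound still holds.

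\textbf{Main obstacle.} The delicate step is handling the $\alpha$-dependent supremum inside the expectation. The decomposition above avoids a ghost-iterate argument by exploiting that $\Delta_{n-1}$ lies in the $\ell_1$ unit ball, so the supremum reduces to $\ell_1$-norms of martingale sums. One must also confirm that $\tilde g^\alpha_{i,t}$ is measurable and conditionally unbiased. This holds because $f_i(x_t,\cdot)$ is a continuous function of $\widehat\zeta_{i,t}$, so the specific choice of $z^\star_{i,t}$ does not matter. Adding the two bounds proves~\eqref{eq:bound_4}.
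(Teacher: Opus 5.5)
Your proposal is correct and takes essentially the same route as the paper. Your $u_{i,t}$ and $-v_{i,t}$ are the paper's sampling error $M_{i,t}$ and oracle error $e^\alpha_{i,t}$; the oracle error is bounded pathwise by $L_\xi^c\epsilon$ via Assumption~\ref{ass:main}(iii), and the paper likewise removes $\sum_t\langle\alpha_t,u_t\rangle$ by the tower property, uses $\sup_{\alpha\in\Delta_{n-1}}\langle\alpha,S\rangle\le\sum_i|S_i|$, and applies martingale orthogonality with Cauchy--Schwarz to get $2C_i\sqrt{T}$ per client.
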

\end{samepage}

Taking expectations in~\eqref{eq:performance_gap_4terms} and applying Lemmas~\ref{lem:primal_update_bound}--\ref{lem:dual_estimation_error}, we obtain
\ifdefined\shortpaper\else
    \begin{align}
&~\ev\Big[\sup\limits_{\alpha\in\Delta_{n-1}}F(\overline{x}_T,\alpha)-\inf\limits_{x\in\calX}\sup\limits_{\alpha\in\Delta_{n-1}}F(x,\alpha)\Big]\notag\\
        \leq&~ \frac{D_x^2}{2\eta_x T}+\frac{\eta_xB_2^2}{2}+\frac{2D_xB_2}{\sqrt{T}}+D_xL_{x\xi}\epsilon\notag\\
    &~+\frac{D_\alpha^{2}}{2\eta_\alpha T}+\frac{\eta_\alpha }{2}\sum_{i=1}^nC_i^2+\frac{2}{\sqrt{T}}\sum_{i=1}^nC_i+nL_{\xi}^c\epsilon\notag\\
    =&~\frac{2D_xB_2+2\sum_{i=1}^nC_i}{\sqrt{T}}+\frac{D_x^2+D_\alpha^{2}+B_2^2+\sum_{i=1}^nC_i^2}{2\sqrt{T}}\notag\\
    &~+(D_xL_{x\xi}+nL_{\xi}^c)\epsilon
\end{align}
where the last equality holds when selecting $\eta_x=\eta_\alpha=\frac{1}{\sqrt{T}}$.
\fi
\end{proof}
\endgroup

\ifdefined\shortpaper\else\simulationfigure\fi

\section{Simulation}\label{sec:simulation}
\begingroup
\ifdefined\shortpaper\else
\textcolor{black}{We consider a synthetic federated linear regression setting with $n$ heterogeneous clients, learning a shared linear predictor from corrupted observations and testing on clean data.} \textcolor{black}{The shared ground-truth coefficient vector $a\in\bbR^3$ is unknown to the clients, and $x\in\bbR^3$ is the parameter vector of the predictor learned from their observed data.}
\textcolor{black}{The clean feature vector $\theta\in\bbR^3$ represents the features before observation errors are added.} For client $\textcolor{black}{i\in[n]}$, clean features \textcolor{black}{$\theta$} follow $\calN(\textcolor{black}{\mu_i},I_3)$ restricted to $[-500,500]^3$, where $I_3$ is the identity matrix\textcolor{black}{. The Gaussian mean vector is $\mu_i=\delta_i v_i$, where $\delta_i\geq0$ controls its magnitude and $v_i$ is a unit direction.}
The response is $y=\theta^\top a$. \textcolor{black}{Client $i$ observes $(\bar\theta,y)$, with} $\textcolor{black}{\bar\theta}=\theta+s_iv_i+\gamma$, \textcolor{black}{where $s_i$ is a prescribed shift coefficient along $v_i$ and} $\gamma\sim\calN(0,0.1^2I_3)$ \textcolor{black}{is observation noise. Thus, the label is generated from the clean feature vector, while training uses the corrupted feature vector.}
\fi
For $\xi=(\theta,y)$, we use $\ell(x,\xi)=(y-\theta^\top x)^2$ and $c(\zeta,\xi)=\|\zeta-\xi\|^2$. Applied to empirical samples $(\textcolor{black}{\bar\theta},y)$, formulation~\eqref{eq:penalized_DRO} optimizes client weights and permits perturbations of both features and labels.
\ifdefined\shortpaper\else
\par
\textcolor{black}{We use $n=5$ clients with sample sizes~$(30,40,80,150,300)$.} We set $a=[1,0.5,-0.5]^\top$ and $(\delta_i)=(3,1,1,0.5,0.5)$. The shifts are $(s_i)=(0.2,-0.1,0,0,0)$.
\textcolor{black}{The unit directions are constructed so that $v_1,v_3$ are nearly aligned with $a$, $v_2$ is nearly aligned with $-a$, and $v_4,v_5$ are orthogonal to $a$.}
\textcolor{black}{The directions are generated with this structure once per run as follows. First, let $u=a/\|a\|$. Draw two independent standard Gaussian vectors in $\bbR^3$, project each onto $u^\perp$, and normalize the projections to obtain $b_1,b_2$. The directions $v_1,v_2,v_3$ are the normalized versions of $u+0.2b_1$, $-u+0.2b_1$, and $u+0.2b_2$, respectively; $v_4=b_1$ and $v_5=b_2$. Thus, the $v_i$ are random through $b_1,b_2$ and are linked by this construction.}
\textcolor{black}{In this synthetic setting, each client uses a shared direction $v_i$ for its Gaussian mean and systematic observation shift.}
Corrupted features and labels are clipped to $[-500,500]$ coordinatewise. All methods initialize the model at zero and the mixture weights at the empirical proportions $N_i/\sum_jN_j$, with $x\in[-100,100]^3$.
\fi
\ifdefined\shortpaper\fi

All methods run for $10^4$ communication rounds. BiDRO-FL uses one sample per client per round and client-specific penalties.
\ifdefined\shortpaper\else
Its step sizes are $\eta_x=\eta_\alpha=0.01$, with penalties $\rho=(15,10,5,4,2.5)$.
\fi
\ifdefined\shortpaper\else
The inner solver performs $30$ projected ascent steps of size $0.003$, followed by a random perturbation of norm $0.01$ and projection.
\fi
AFL~\cite{mohri2019agnostica} uses full local empirical losses.
\ifdefined\shortpaper\else
Its step sizes are $0.001$ for both variables.
\fi The DRO-FL curve is a shared-penalty BiDRO-FL variant with $\rho_i=5$.
\endgroup
\begingroup
We use $20$ runs, sharing training data across methods.
\ifdefined\shortpaper\else
The data seeds are $2026$--$2045$.
\fi Within each run, all methods are evaluated on a common clean test mixture using $4000$ samples.
\ifdefined\shortpaper\else
The test mixture weights are obtained from the final BiDRO-FL iterate and determine the allocation of test samples across clients.
\fi Figure~\ref{fig:convergence} shows the decrease of the numerically evaluated robust objective relative to a common reference value, with $T^{-1/2}$ included for comparison.
\ifdefined\shortpaper\else
The common reference value is obtained by numerically solving the empirical minimax problem on the first run's dataset and is reused across runs.
\fi Figure~\ref{fig:OOS} evaluates prediction accuracy on clean test data under the common test mixture within each run. The final mean MSE is $0.00470$ for BiDRO-FL, compared with $0.00569$ for DRO-FL and $0.01318$ for AFL. These results demonstrate that BiDRO-FL achieves better prediction accuracy than the compared methods.

\endgroup

\section{Conclusion}\label{sec:conclusion}
In this paper, we study a federated learning problem under two sources of uncertainty: within-client distributional ambiguity and cross-client mixture uncertainty.
We construct a global ambiguity set by mixing local ambiguity sets, and derive a high-probability out-of-sample bound in terms of sample sizes, client heterogeneity, and mixture-weight deviations.
To solve the resulting DRO problem efficiently, we introduce a penalty-based relaxation of the set-membership constraints and develop the BiDRO-FL algorithm with a provable convergence rate.
\textcolor{black}{Future work would include exploring federated and distributed algorithms that are communication efficient and do not require a central server for coordination.}

\ifdefined\shortpaper\else
\begingroup
\useRomanappendicesfalse
\appendices
\section{\textcolor{black}{Proofs of Lemmas~\ref{lem:primal_update_bound}--\ref{lem:dual_estimation_error}}}\label{app:convergence_proofs}

\textcolor{black}{Throughout this appendix, we condition on the observed local datasets. Let $\mathcal F_t$ denote the $\sigma$-algebra generated by the algorithmic history before sampling at round $t$, including the initial iterates. Then $x_t$ and $\alpha_t$ are $\mathcal F_t$-measurable. By fresh sampling, the conditional distribution of $\widehat\zeta_{i,t}$ given $\mathcal F_t$ is $\widehat{\bbP}_i$.}

\subsection{\textcolor{black}{Proof of Lemma~\ref{lem:primal_update_bound}}}\label{app:primal_update_bound}
\begin{proof}

By the nonexpansiveness of the projection in~\eqref{eq:global_x_update} and the local update~\eqref{eq:local_update}, we have
        \begin{align}~\label{eq:upperbound_xg}
            &~\|x_{t+1}-x\|^2\nonumber\\
            \leq&~\|x_t-\sum_{i=1}^n\alpha_{i,t}\eta_xg_{i,t}^x -x\|^2\nonumber\\
            =&~\|x_t-x\|^2+\eta_x^2\|\sum_{i=1}^n\alpha_{i,t}g_{i,t}^x\|^2-2\eta_x\langle x_t-x,g_t^x\rangle\,\textcolor{black}{.}
        \end{align}
        Summing~\eqref{eq:upperbound_xg} over $T$ rounds yields the following upper bound:
            \begin{small}
                \begin{align}
                &~\sum_{t=1}^T\langle x_t-x,g_t^x\rangle\notag\\
                \leq&~\frac{1}{2\eta_x}\sum_{t=1}^T\Big(\|x_t-x\|^2-\|x_{t+1}-x\|^2+\eta_x^2\Big\|\sum_{i=1}^n\alpha_{i,t}g_{i,t}^x\Big\|^2\Big)\notag\\
                \leq&~\frac{1}{2\eta_x}\|x_1-x\|^2+\frac{\eta_x}{2}\sum_{t=1}^T\Big\|\sum_{i=1}^n\alpha_{i,t}g_{i,t}^x\Big\|^2\notag\\
                \leq&~\frac{D_x^2}{2\eta_x}+\frac{\eta_x}{2}\sum_{t=1}^T\Big(\sum_{i=1}^n\alpha_{i,t}\|g_{i,t}^x\|\Big)^2\notag\\
                \leq&~\frac{D_x^2}{2\eta_x}+\frac{\eta_xTB_2^2}{2}\,,\notag
            \end{align}\end{small}%
        where the third inequality holds due to the boundedness of the set $\calX$. The last inequality follows from the bounded gradients of $\ell$ with respect to $x$ \textcolor{black}{(see Assumption~\ref{ass:main})}.

\end{proof}

\subsection{\textcolor{black}{Proof of Lemma~\ref{lem:primal_estimation_error}}}\label{app:primal_estimation_error}
\begin{proof}

By Assumption~\ref{ass:main}, the map $x\mapsto f_i(x,\zeta)$ is convex for each $i$ and $\zeta$. Define the set of all inner maximizers by $\calZ_i^\star(x,\zeta):=\arg\max_{\xi\in\Xi}\phi_i(x,\zeta,\xi)$. Danskin's theorem~\cite{Bertsekas2016Nonlinear} gives
        \begin{align}
         \partial_x f_i(x,\zeta)=\operatorname{conv}\bigl\{\nabla_x\ell(x,\xi):\xi\in\calZ_i^\star(x,\zeta)\bigr\}.
        \end{align}
        Then, the subgradient of $F(x,\alpha)$ with respect to $x$ can be computed as
        \begin{align}
            \partial_x F(x_t,\alpha_t)=\sum_{i=1}^n \alpha_{i,t}\ev_{\zeta\sim\widehat{\bbP}_i}\left[\partial_x f_i(x_t,\zeta)\right].
        \end{align}
        For the implemented approximate maximizer $z_{i,t}$, choose a nearest point $z_{i,t}^{\mathrm{opt}}$ in $\calZ_{i,t}^\star=\calZ_i^\star(x_t,\widehat\zeta_{i,t})$. Such a point exists because this set is nonempty and compact. The oracle accuracy condition then gives $\|z_{i,t}-z_{i,t}^{\mathrm{opt}}\|=\operatorname{dist}(z_{i,t},\calZ_{i,t}^\star)\leq\epsilon$.
        Consequently, the specific maximizer $z_{i,t}^{\mathrm{opt}}$ yields the following subgradients:
        \begin{subequations}
            \begin{align}
                &G_{i,t}^x(\widehat{\zeta}_{i,t}):=\nabla_x\ell(x_t,z_{i,t}^{\mathrm{opt}})\in \partial_x f_i(x_t,\widehat\zeta_{i,t}),\\
                &sg_t^x=\ev\Big[\sum_{i=1}^n \alpha_{i,t}G_{i,t}^x(\widehat{\zeta}_{i,t})\textcolor{black}{\,\Big|\,\mathcal F_t}\Big]\in \partial_x F(x_t,\alpha_t),\label{eq:sg_expectation}
            \end{align}
        \end{subequations}
        where  $G_{i,t}^x(\widehat{\zeta}_{i,t})$ is an exact subgradient of $f_i(\cdot,\widehat{\zeta}_{i,t})$.
        The $\epsilon$-approximate maximizer induces a biased error $e_{i,t}^x:=g_{i,t}^x-G_{i,t}^x(\widehat{\zeta}_{i,t})$, bounded by
        \begin{align}
            \begin{aligned}
                \|e_{i,t}^x\|&=\|\nabla_x \ell(x_t,z_{i,t})-\nabla_x\ell(x_t,z_{i,t}^{\mathrm{opt}})\|\\
                &\leq L_{x\xi}\|z_{i,t}-z_{i,t}^{\mathrm{opt}}\|\leq  L_{x\xi}\epsilon\,,
            \end{aligned}
        \end{align}
        where the first inequality follows from the $L_{x\xi}
        $-Lipschitz continuity of $\nabla_x\ell(x,\xi)$ with respect to $\xi$.
        It follows that
        \begin{small}
            \begin{align}
            &~\sup_{x\in\calX}\sum_{t=1}^T\langle x_t-x,sg_t^x-g_t^x\rangle\notag\\
            =&~\sup_{x\in\calX}\Bigl\{\sum_{t=1}^T\langle x_t-x,sg_t^x-\sum_{i=1}^n\alpha_{i,t}G_{i,t}^x(\widehat{\zeta}_{i,t})\rangle\notag\\
            &~+\sum_{t=1}^T\langle x_t-x,\sum_{i=1}^n\alpha_{i,t}G_{i,t}^x(\widehat{\zeta}_{i,t})-\sum_{i=1}^n\alpha_{i,t}g_{i,t}^x\rangle\Bigr\}\notag\\
            \leq&~\sup_{x\in\calX}\sum_{t=1}^T\langle x_t-x,sg_t^x-\sum_{i=1}^n\alpha_{i,t}G_{i,t}^x(\widehat{\zeta}_{i,t})\rangle\notag\\
            &~+\sum_{t=1}^TD_x\|\sum_{i=1}^n\alpha_{i,t}e_{i,t}^x\|\notag\\
            \leq&~\sup_{x\in\calX}\sum_{t=1}^T\langle -x,sg_t^x-\sum_{i=1}^n\alpha_{i,t}G_{i,t}^x(\widehat{\zeta}_{i,t})\rangle\notag\\
            &~+ \sum_{t=1}^T\langle x_t,sg_t^x-\sum_{i=1}^n\alpha_{i,t}G_{i,t}^x(\widehat{\zeta}_{i,t})\rangle+TD_xL_{x\xi}\epsilon\,. \label{eq:term2}
        \end{align}
        \end{small}%
        \textcolor{black}{Since $x_t$ is $\mathcal F_t$-measurable,~\eqref{eq:sg_expectation} and the tower property imply that the expectation of the term involving $x_t$ in~\eqref{eq:term2} is zero. Taking expectations in~\eqref{eq:term2} yields}
        \begin{small}
            \begin{align}
                &~\ev\Big[\sup_{x\in\calX}\sum_{t=1}^T\langle x_t-x,sg_t^x-g_t^x\rangle\Big]\notag\\
                \leq&~\ev\Big[\sup_{x\in\calX}\sum_{t=1}^T\langle -x,sg_t^x-\sum_{i=1}^n\alpha_{i,t}G_{i,t}^x(\widehat{\zeta}_{i,t})\rangle\Big]+TD_xL_{x\xi}\epsilon\notag\\
                \leq&~ D_x\ev\Big[\Big\|\sum_{t=1}^T\Big(sg_t^x-\sum_{i=1}^n\alpha_{i,t}G_{i,t}^x(\widehat{\zeta}_{i,t})\Big)\Big\|\Big]+TD_xL_{x\xi}\epsilon\notag\\
                \leq&~ D_x\sqrt{\ev\Big[\Big\|\sum_{t=1}^T\Big(sg_t^x-\sum_{i=1}^n\alpha_{i,t}G_{i,t}^x(\widehat{\zeta}_{i,t})\Big)\Big\|^2\Big]}+TD_xL_{x\xi}\epsilon\notag\\
               =&~ D_x\sqrt{\ev\Big[\sum_{t=1}^T\Big\|sg_t^x-\sum_{i=1}^n\alpha_{i,t}G_{i,t}^x(\widehat{\zeta}_{i,t})\Big\|^2\Big]}+TD_xL_{x\xi}\epsilon\notag\\
                \leq&~2D_xB_2\sqrt{T}+TD_xL_{x\xi}\epsilon\,,~\notag
            \end{align}
        \end{small}%
        where the third inequality follows from Jensen's inequality. \textcolor{black}{The last equality follows from the conditional mean-zero property in~\eqref{eq:sg_expectation} and the tower property: the error at an earlier round is measurable with respect to the history at a later round. Thus, for all $1\leq t^\prime\neq t\leq T$,}
        \begin{small}
            \begin{align*}
        \ev\Big[\Big\langle sg_t^x-\sum_{i=1}^n\alpha_{i,t}G_{i,t}^x(\widehat{\zeta}_{i,t}),
        sg_{t^\prime}^x-\sum_{i=1}^n\alpha_{i,{t^\prime}}G_{i,{t^\prime}}^x(\widehat{\zeta}_{i,{t^\prime}})\Big\rangle\Big]=0,
    \end{align*}
        \end{small}%
where $sg_{t^{\prime}}^x\in\partial_x F(x_{t^\prime},\alpha_{t^\prime})$. The last inequality holds since $\|sg_t^x\|\le B_2$ and $\|G_{i,t}^x(\widehat{\zeta}_{i,t})\|\le B_2$\,.

\end{proof}

\subsection{\textcolor{black}{Proof of Lemma~\ref{lem:dual_update_bound}}}\label{app:dual_update_bound}
\begin{proof}

By the update rule~\eqref{eq:global_alpha_update}, we have
   \begin{align}
       &~\| \alpha_{t+1}-\alpha\|^2\notag\\
       \leq&~ \|\alpha_t+\eta_\alpha g_t^\alpha-\alpha\|^2\notag\\
   =&~\|\alpha_t-\alpha\|^2+\eta_\alpha^2\|g_t^\alpha\|^2+2\eta_\alpha\langle \alpha_t-\alpha, g_t^\alpha\rangle\textcolor{black}{,}
   \end{align}
\textcolor{black}{where the inequality follows from the nonexpansiveness of the projection.}
By the definition of $C_i$, $|g_{i,t}^\alpha|\leq C_i$. It follows that
       \begin{small}
           \begin{align}
            &~\sum_{t=1}^T\langle \alpha-\alpha_t,g_t^\alpha\rangle\notag\\
            \leq&~ \frac{1}{2\eta_\alpha}\sum_{t=1}^T\left(\|\alpha_t-\alpha\|^2-\|\alpha_{t+1}-\alpha\|^2+\eta_\alpha^2\|g_t^\alpha\|^2\right)\notag\\
        \leq&~ \frac{1}{2\eta_\alpha}\|\alpha_1-\alpha\|^2+\frac{\eta_\alpha }{2}\sum_{t=1}^T\|g_t^\alpha\|^2\notag\\
        \leq&~ \frac{D_\alpha^{2}}{2\eta_\alpha}+\frac{\eta_\alpha T}{2}\sum_{i=1}^n C_i^2\,.\notag
       \end{align}
       \end{small}%

\end{proof}

\subsection{\textcolor{black}{Proof of Lemma~\ref{lem:dual_estimation_error}}}\label{app:dual_estimation_error}
\begin{proof}

Define $G_{i,t}^\alpha(\zeta):=f_i(x_t,\zeta)=\sup_{\xi\in\Xi}\phi_i(x_t,\zeta,\xi)$.
    The gradient of $F(x,\alpha)$ with respect to $\alpha_i$ can be computed as
        \begin{align}\label{eq:metric_gra_alpha}
            \nabla_{\alpha_i} F(x_t,\alpha_t)=\ev_{\zeta\sim\widehat{\bbP}_i}\left[ f_i(x_t,\zeta)\right]=\ev_{\zeta\sim\widehat{\bbP}_i}[G_{i,t}^\alpha(\zeta)].
        \end{align}
\textcolor{black}{Define the sampling error}
{\color{black}
\begin{align*}
M_{i,t}:=\nabla_{\alpha_i}F(x_t,\alpha_t)-G_{i,t}^\alpha(\widehat\zeta_{i,t}).
\end{align*}}
\textcolor{black}{Fresh sampling and~\eqref{eq:metric_gra_alpha} imply $\ev[M_{i,t}\mid\mathcal F_t]=0$. Moreover, $|G_{i,t}^\alpha(\widehat\zeta_{i,t})|\leq C_i$ and $|\nabla_{\alpha_i}F(x_t,\alpha_t)|\leq C_i$, so $|M_{i,t}|\leq 2C_i$.}
    Similarly, define the biased error by $e_{i,t}^\alpha:= g_{i,t}^\alpha-G_{i,t}^\alpha (\widehat{\zeta}_{i,t})$. It satisfies
    \begin{align}
            \|e_{i,t}^\alpha\|=&~\|(\ell(x_t,z_{i,t})-\rho_i c(\widehat{\zeta}_{i,t},z_{i,t}))\notag\\
            &~-(\ell(x_t,z_{i,t}^{\mathrm{opt}})-\rho_i c(\widehat{\zeta}_{i,t},z_{i,t}^{\mathrm{opt}}))\|
            \leq L_{\xi}^c\epsilon,
        \end{align}
    where $z_{i,t}^{\mathrm{opt}}\in\calZ_{i,t}^\star$ is the specific optimizer used before.
    Moreover, we obtain
        \begin{align}
        &\sup_{\alpha\in\Delta_{n-1}}\sum_{t=1}^T\langle \alpha-\alpha_t,\nabla_\alpha F(x_t,\alpha_t)-g_t^\alpha
\rangle\notag\\
=&\sup_{\alpha\in\Delta_{n-1}}\Bigl\{\sum_{t=1}^T\sum_{i=1}^n(\alpha_i-\alpha_{i,t})\bigl(\nabla_{\alpha_i}F(x_t,\alpha_t)-G_{i,t}^\alpha(\widehat{\zeta}_{i,t})\bigr)\notag\\
&+\sum_{t=1}^T\sum_{i=1}^n(\alpha_i-\alpha_{i,t})\bigl(G_{i,t}^\alpha(\widehat{\zeta}_{i,t})-g_{i,t}^\alpha\bigr)\Bigr\}\notag\\
&\leq \sup_{\alpha\in\Delta_{n-1}}\sum_{t=1}^T\sum_{i=1}^n(\alpha_i-\alpha_{i,t})\bigl(\nabla_{\alpha_i}F(x_t,\alpha_t)-G_{i,t}^\alpha(\widehat{\zeta}_{i,t})\bigr)\notag\\
&+nTL_{\xi}^c\epsilon.\label{eq:term4}
\end{align}
\textcolor{black}{Taking expectations over the algorithm's randomness in~\eqref{eq:term4} gives}
    \begin{small}
         \begin{align}
        &\ev\Big[\sup_{\alpha\in\Delta_{n-1}}\sum_{t=1}^T\langle \alpha-\alpha_t,\nabla_\alpha F(x_t,\alpha_t)-g_t^\alpha
\rangle\Big]\notag\\
\leq&~\ev\Big[\sup_{\alpha\in\Delta_{n-1}}\sum_{t=1}^T\sum_{i=1}^n(\alpha_i-\alpha_{i,t})\bigl(\nabla_{\alpha_i}F(x_t,\alpha_t)-G_{i,t}^\alpha(\widehat{\zeta}_{i,t})\bigr)\Big]\notag\\
&~+nTL_{\xi}^c\epsilon\notag\\
\overset{\textcolor{black}{(a)}}{=}&~\ev\Big[\sup_{\alpha\in\Delta_{n-1}}\sum_{t=1}^T\sum_{i=1}^n\alpha_i\bigl(\nabla_{\alpha_i}F(x_t,\alpha_t)-G_{i,t}^\alpha(\widehat{\zeta}_{i,t})\bigr)\Big]+nTL_{\xi}^c\epsilon\notag\\
\leq&~ \ev\Big[\sum_{i=1}^n\Big|\sum_{t=1}^T\bigl(\nabla_{\alpha_i}F(x_t,\alpha_t)-G_{i,t}^\alpha(\widehat{\zeta}_{i,t})\bigr)\Big|\Big]+nTL_{\xi}^c\epsilon\notag\\
\overset{\textcolor{black}{(b)}}{\leq}&~ 2\sqrt{T}\sum_{i=1}^nC_i+nTL_{\xi}^c\epsilon\,, \notag
    \end{align}
    \end{small}%
\textcolor{black}{where equality (a) uses the tower property of conditional expectation. Since $\alpha_{i,t}$ is $\mathcal F_t$-measurable,}
{\color{black}
\begin{align*}
\ev[\alpha_{i,t}M_{i,t}]
=\ev\big[\alpha_{i,t}\ev[M_{i,t}\mid\mathcal F_t]\big]=0.
\end{align*}
For each realization, the supremum is taken over $\alpha$, while the iterates $\alpha_t$ are held fixed. We can therefore take $\sum_{t,i}\alpha_{i,t}M_{i,t}$ outside the supremum and apply the preceding identity to obtain (a).
To prove (b), note that for $s<t$, $M_{i,s}$ is $\mathcal F_t$-measurable. Hence, the tower property also gives}
{\color{black}
\begin{align*}
\ev[M_{i,s}M_{i,t}]
=\ev\big[M_{i,s}\ev[M_{i,t}\mid\mathcal F_t]\big]=0.
\end{align*}
By the Cauchy--Schwarz inequality,}
{\color{black}
\begin{align*}
\ev\left[\left|\sum_{t=1}^T M_{i,t}\right|\right]
&\leq\sqrt{\ev\left[\left(\sum_{t=1}^T M_{i,t}\right)^2\right]}\\
&=\sqrt{\sum_{t=1}^T\ev[M_{i,t}^2]}
\leq 2C_i\sqrt T,
\end{align*}}%
\textcolor{black}{where the equality follows because the cross terms have zero expectation, as shown above, and the last inequality uses $|M_{i,t}|\leq 2C_i$. Summing over clients proves (b).}

\end{proof}
\endgroup

\fi

\section*{\textcolor{black}{Acknowledgment}}
\textcolor{black}{ChatGPT~\cite{openaiChatGPT} assisted with manuscript review, proofreading, and consistency checks. The authors retain full responsibility for all content and results.}

\begingroup
\let\originalbibitem\bibitem
\renewcommand{\bibitem}[2][]{%
  \par\color{black}%
  \ifthenelse{\equal{#2}{mohajerinesfahani2018datadriven}\OR
    \equal{#2}{kuhn2019wasserstein}\OR
    \equal{#2}{cherukuri2020cooperative}\OR
    \equal{#2}{nguyen2022generalization}\OR
    \equal{#2}{ibrahim2025fdrsvm}\OR
    \equal{#2}{boskos2024highconfidence}\OR
    \equal{#2}{Bertsekas2016Nonlinear}\OR
    \equal{#2}{sion1958general}\OR
    \equal{#2}{openaiChatGPT}}{\color{black}}{}%
  \ifthenelse{\equal{#2}{nguyen2022generalization}}{\color{black}}{}%
  \ifthenelse{\equal{#1}{}}{\originalbibitem{#2}}{\originalbibitem[#1]{#2}}%
}
\bibliographystyle{IEEEtran}
\bibliography{IEEEabrv,mybibfile}
\endgroup

\end{document}